\def\onedot{$\mathsurround0pt\ldotp$}
\def\eg{\emph{e.g}\onedot, }
\def\ie{\emph{i.e}\onedot, }
\newtheorem{theorem}{Theorem}
\newtheorem{assumption}{Assumption}
\newtheorem{proposition}{Proposition}
\newcommand{\diff}{\mathrm{d}}
\def\Figref#1{Fig.~\ref{#1}}
\def\secref#1{section~\ref{#1}}
\def\eqref#1{equation~\ref{#1}}
\def\Eqref#1{Eq.~(\ref{#1})}
\def\Algref#1{Algorithm~\ref{#1}}
\def\1{\bm{1}}
\DeclareMathAlphabet{\mathsfit}{\encodingdefault}{\sfdefault}{m}{sl}
\SetMathAlphabet{\mathsfit}{bold}{\encodingdefault}{\sfdefault}{bx}{n}
\def\gB{{\mathcal{B}}}
\def\gN{{\mathcal{N}}}
\def\gR{{\mathcal{R}}}
\def\gX{{\mathcal{X}}}
\def\gY{{\mathcal{Y}}}
\def\gZ{{\mathcal{Z}}}
\newcommand\numberthis{\addtocounter{equation}{1}\tag{\theequation}}
\newcommand{\E}{\mathbb{E}}
\newcommand{\Ls}{\mathcal{L}}
\DeclareMathOperator*{\argmax}{arg\,max}
\DeclareMathOperator*{\argmin}{arg\,min}
\newsavebox\tmpbox
\def\setstretch#1{\renewcommand{\baselinestretch}{#1}}
\title{Learning Representations that Support\\ Robust Transfer of Predictors}
\author{Yilun Xu \\
Massachusetts Institute of Technology\\
\texttt{ylxu@mit.edu} \\
\And
Tommi Jaakkola \\
Massachusetts Institute of Technology\\
\texttt{tommi@csail.mit.edu}\\
}
\begin{document}

\maketitle

\begin{abstract}
Ensuring generalization to unseen environments remains a challenge. Domain shift can lead to substantially degraded performance unless shifts are well-exercised within the available training environments. We introduce a simple robust estimation criterion -- transfer risk -- that is specifically geared towards optimizing transfer to new environments. Effectively, the criterion amounts to finding a representation that minimizes the risk of applying any optimal predictor trained on one environment to another. The transfer risk essentially decomposes into two terms, a direct transfer term and a weighted gradient-matching term arising from the optimality of per-environment predictors. Although inspired by IRM, we show that transfer risk serves as a better out-of-distribution generalization criterion, both theoretically and empirically. We further demonstrate the impact of optimizing such transfer risk on two controlled settings, each representing a different pattern of environment shift, as well as on two real-world datasets. Experimentally, the approach outperforms baselines across various out-of-distribution generalization tasks. Code is available at \url{https://github.com/Newbeeer/TRM}.
\end{abstract}
\section{Introduction}
Training and test examples are rarely sampled from the same distribution in real applications. Indeed, training and test scenarios often represent somewhat different domains. Such discrepancies can degrade generalization performance or even cause serious failures, unless specifically mitigated. For example, standard empirical risk minimization approach (ERM) that builds on the notion of matching training and test distributions rely on statistically informative but non-causal features such as textures~\citep{Geirhos2019ImageNettrainedCA}, background scenes~\citep{Beery2018RecognitionIT}, or word co-occurrences in sentences~\citep{Chang2020InvariantR}. 

Learning to generalize to domains that are unseen during training is a challenging problem. One approach to domain generalization or out-of-distribution generalization is based on reducing variation due to sets or environments one has access to during training. For example, one can align features of different environments~\citep{Muandet2013DomainGV,Sun2016DeepCC} or use data-augmentation to help prevent overfitting to environment-specific features~\citep{Carlucci2019DomainGB,Zhou2020DeepDI}. At one extreme, domain adaptation assumes access to unlabeled test examples whose distribution can be then matched in the feature space (\eg \citep{Ganin2016DomainAdversarialTO}). 

More recent approaches build on causal invariance as the foundation for out-of-distribution generalization. The key assumption is that the available training environments represent nuisance variation, realized by intervening on non-causal variables in the underlying Structural Causal Model~\citep{Pearl2000CausalityMR}. Since causal relationships can be assumed to remain invariant across the training environments as well as any unseen environments, a number of recent approaches~\citep{Peters2015CausalIU,Arjovsky2019InvariantRM,Krueger2020OutofDistributionGV} tailor their objectives to remove spurious~(non-causal) features specific to training environments. 

In this paper, we propose a simple robust criterion termed Transfer Risk Minimization~(TRM). The goal of TRM is to directly translate model's ability to generalize across environments into a learning objective. As in prior work, we decompose the model into a feature mapping and a predictor operating on the features. Our transfer risk in this setting measures the average risk of applying the optimal predictor learned in one environment to examples from another adversarially chosen environment. The feature representation is then tailored to support such robust transfer. Although our work is greatly inspired by IRM~\citep{Arjovsky2019InvariantRM}, we show that TRM serves as a better out-of-distribution criterion with both empirical and theoretical analysis in non-linear case. We further show that the TRM objective decomposes into two terms, direct transfer term and a weighted gradient-matching term with connections to meta-learning. We then propose an alternating updating algorithm for optimizing TRM. 

To evaluate robustness we introduce two patterns of environment shifts based on 10C-CMNIST and SceneCOCO datasets. We construct these controlled settings so as to exercise different combinations of invariant and non-causal features, highlighting the impact of non-causal features in the training environments. In the absence of non-causal confounders, we show that all the methods achieve decent out-of-distribution generalization. When non-causal features are present, however, TRM offers greater robustness against biased training environments. We further demonstrate that our approach leads to good performance on the two real-world datasets, PACS and Office-Home. 

\section{Background and related works}
\paragraph{Domain generalization}
Machine learning models trained with Empirical Risk Minimization may not 
perform well in unseen environments where examples are sampled from a distribution different from training. The problem is known as out-of-distribution generalization or domain generalization~\citep{Blanchard2011GeneralizingFS,Muandet2013DomainGV}. A number of recent approaches have been proposed in this context. We only touch some of them for brevity.  
% By removing variability across environments that are available during training, one can hope to extrapolate to previously unseen environments. 
A typical approach to out-of-distribution generalization involves (distributionally) aligning training environments~\citep{Muandet2013DomainGV,Ganin2016DomainAdversarialTO,Sun2016DeepCC,Li2018DomainGV,Shi2021GradientMF}. Related approaches such as \cite{Nam2019ReducingDG} encourage the model to focus more on shapes via style adversarial learning, adopt data augmentations~\citep{Carlucci2019DomainGB,Zhou2020DeepDI} or meta-learning~\citep{Li2018LearningTG}. 
\vspace{-5pt}
\paragraph{Causal invariance} A recent line of work focuses on promoting invariance as a way to isolate causally meaningful features. Ideally, one would specify a structural equation model~\citep{Pearl2000CausalityMR}, expressing direct and indirect causes, distinguishing them from spurious, environment specific influences that are unlikely to generalize~\citep{Peters2015CausalIU,RojasCarulla2018InvariantMF,Mller2020LearningRM}. Invariance serves as a statistically more amenable proxy criterion towards identifying causally relevant features for predictors. 
\citet{Arjovsky2019InvariantRM} proposed \emph{invariant risk minimization} over feature-predictor decompositions. The main idea is that the predictor operating on causal features can be assumed to be simultaneously optimal across training environments. A number of related approaches have been proposed. For example, \cite{Krueger2020OutofDistributionGV} uses variance of losses as regularization, \cite{Jin2020DomainEV} minimizes the regret loss induced by held-out environments and \cite{Parascandolo2020LearningET} aligns gradient signs across environments by and-mask.\vspace{-5pt}
\paragraph{Distributionally robust optimization (DRO)} DRO specifies a minimax criterion for estimating predictors where an adversary gets to modify the training distribution. 
% If such modifications include possible test distributions, the criterion helps minimize the associated risk. 
The allowed modifications are typically expressed in terms of divergence balls around the training distribution~\citep{BenTal2013RobustSO,Duchi2016StatisticsOR,Esfahani2018DatadrivenDR}. Closer to our work, Group DRO~\citep{Hu2018DoesDR,Sagawa2019DistributionallyRN} defines uncertainty regions in terms of a simplex over (fixed) training groups. Both DRO and Group DRO minimize the worst-case loss of the predictor within the uncertain regions. Unlike these methods, we use a predictor-representation decomposition, and define a regularizer over the representation using a minimax criterion. Moreover, we explicitly measure the risk of a predictor trained in one environment but applied to another.  

% \paragraph{Game theoretical feature learning}
% \citet{Chang2020InvariantR} proposes a game theoretical algorithm to improve the interpretability of features.

\section{Transfer Risk Minimization}
%: New objective for feature learning }
%\subsection{Notations}
Consider a classification problem from input space $\gX$ (e.g. images) to output space $\gY$ (e.g. labels). We are given $E$ training environments  $\Omega=\{{P}_1(\gX \times \gY),\dots,{P}_E(\gX \times \gY)\}$, where $P_i$ is the empirical distribution for environment $i$. We decompose our model into two parts: feature extractor $\Phi:\gX \to \gZ$, which maps the input to a feature representation, and predictor $w:\gZ\to \gY$ that operates on the features to realize the final output. We call their concatenation $w\circ \Phi$ as a classifier. We use  $\ell(w \circ \Phi(x);y)$ to denote the cross-entropy loss on a training point $(x,y)\in \gX \times \gY$. As a shorthand, the expected loss with respect to a distribution $P$ is given as $\E_P[\ell(w \circ \Phi)]$. The broader goal is to learn a pair of feature extractor and predictor that minimize the risk on some unseen environment $\hat{P}$:
\begin{align*}
    \gR(\Phi,w) = \E_{\hat{P}}[\ell (w\circ\Phi)]
\end{align*}
% \yx{Extensive works show that ERM-trained classifiers fails to generalize the unseen environments, and degrades largely when there exist non-causal features.}
As a step towards this goal, we learn a predictively robust model $(\Phi,w)$ across the available training environments (defined later). While the high level aim here resembles invariant risk minimization~\citep{Arjovsky2019InvariantRM}, our proposed estimation criterion is based on robustness rather than invariance. 
\subsection{Estimation criterion} 
We define group (environment) robustness based on exchangeability of predictors. Specifically, we require that environment-specific predictors $w$ generalize also to other training environments. Note that this doesn't imply that a single predictor is per-environment optimal or invariant as in IRM. Instead, our representation $\Phi$ aims to minimize \emph{transfer risk} across a set of training environments $\Omega$ 
\begin{align*}
    \gR(\Phi;\Omega)=\sum_{Q\in \Omega}\left(\sup\limits_{P \in \textrm{Conv}(\Omega \backslash Q)}\E_{P}[\ell(w(Q;\Phi) \circ \Phi)]\right)\numberthis\label{eq:obj-robust}
\end{align*}
where $w(Q;\Phi) = \argmin\limits_{w}\E_{Q}[\ell(w \circ \Phi)]$ refers to the optimal predictor $w$ with respect to distribution $Q$. $\textrm{Conv}(\Omega \backslash Q) = \left\{\sum_{P_i \in \Omega\backslash Q} \mathbf{\alpha}_i(Q)P_i\mid \alpha_i(Q) \ge 0, \parallel \mathbf{\alpha}(Q)\parallel_1=1\right\}$ is the convex hull of environment specific distributions, excluding $Q$. Unlike methods in the DRO family~\citep{BenTal2013RobustSO,Sagawa2019DistributionallyRN} that do not decompose the predictors, the robust estimation criterion here is specifically tailored to measure the goodness of features in terms of their ability to permit generalization across the environments. We will show in later sections that transfer risk (\Eqref{eq:obj-robust}) indeed ensures better out-of-distribution generalization.
\paragraph{Remark} We introduced transfer risk in \Eqref{eq:obj-robust} as a ``sum-sup'' criterion with respect to outer and inner terms. Other possible versions with similar estimation consequences include sum-sum, \ie $\gR(\Phi;\Omega)=\sum\limits_{Q\in \Omega}\left(\sum\limits_{P \in \textrm{Conv}(\Omega \backslash Q)}\E_{P}[\ell(w(Q;\Phi) \circ \Phi)]\right)$. Note that the criterion still measures whether the feature representation allows a predictor trained in one environment to generalize to another. We expect this version to behave similarly when training environments have comparable noise levels, complexities. However, sum-sum version can be more resistant to environmental outliers. 

\subsection{Comparison with IRM}

\label{sec:irm}
IRM~\citep{Arjovsky2019InvariantRM} is a popular objective for learning features that are invariant across training environments. Specifically, IRM finds a feature extractor such that the associated predictor is simultaneously optimal for every training environment. In our notation 
\begin{align*}
    \textrm{(IRM)} \quad \min\limits_{\Phi,w} \sum_{P\in \Omega}\E_{P}[\ell(w \circ \Phi)] \qquad \textrm{subject to  } w \in \argmin_w \E_{P}[\ell(w \circ \Phi)], \forall P \in \Omega
\end{align*}
IRM specifies a more restrictive set of admissible feature extractors $\Phi$ than transfer risk. Specifically, per-environment optimal predictors in IRM must agree (contain a common predictor) whereas transfer risk uses the per-environment optimal predictor to guide the representation learning. 
% The notions are related but not the same. For example, invariance is not required for low transfer risk. 
% However, in specific scenarios, per-environment predictors in TRM do gradually converge to the same optimality condition in the course of the training process~(\yx{Appendix}). 
Due to the difficulty of solving the IRM bi-leveled optimization problem, \citet{Arjovsky2019InvariantRM} introduced a relaxed objective called IRMv1 where the constraints are replaced by gradient penalties: 
% Moreover, IRM optimal solution is also a stationary point of transfer risk, as shown below. 
% \begin{restatable}{proposition}{irmtrm}
% \label{prop:irmtrm}
% Assuming two environments, the optimal solutions for $\Phi$ in the IRM objective are also  stationary points of the transfer risk~(\Eqref{eq:obj-robust}). The converse is not true in general.
% \end{restatable}
\begin{align*}
   \textrm{(IRMv1)} \quad \min\limits_{\Phi,w} \quad \sum_{P\in \Omega}\E_{P}[\ell(w \circ \Phi)] + \lambda \parallel \nabla_w \E_{P}[\ell(w \circ \Phi)]\parallel^2\numberthis\label{eq:irmv1}
\end{align*}
To compare with IRM, we use the theoretical framework in \citet{Rosenfeld2020TheRO}. For each environment, the data are defined by the following process: the binary label $y$ is sampled uniformly from $\{\pm 1\}$ and the environmental features $[z_c,z_e]$ are sampled subsequently from label-conditioned Gaussians:
\[z_c \sim \gN(y*\mu_c,\sigma_c^2 I_{d_c}) ; z_e \sim \gN(y*\mu_i,\sigma_e^2 I_{d_e}),i \in \{1,\dots,E\}\]
with $\mu_c \in \mathbb{R}^{d_c}, \mu_e \in \mathbb{R}^{d_e}$. The invariant feature mean $\mu_c$ remains the same for all environments while non-causal means $\mu_i$s vary across environments. The observation $x$ is generated as a function of the latent features: $x=f(z_c,z_e)$, where $f$ is a injective function that maps low dimensional features to high dimensional observations $x$. 

Theorem 3.3 in \citet{Rosenfeld2020TheRO} shows that for non-linear $f$, there exists a non-linear classifier $(\Phi,w)$ that has nearly optimal IRMv1 loss. In addition, it is equivalent to ERM solution on nearly all test points when the non-causal mean in the test environment is sufficiently different from those in training. Below we show that TRM can avoid the failure mode of IRM.

% Suppose the predictor $w$ is linear, then the ideal classifier for out-of-distribution generalization is:
% $\Phi_{inv}(x) = \begin{bmatrix}
% I & 0\\
% 0 &0
% \end{bmatrix}\circ f^{-1}(x) = \begin{bmatrix}z_c\end{bmatrix}, w_{inv}\circ \Phi_{inv}(x)  = 2\mu_c/\sigma_c^2 \Phi(x) $.

% \citep{Rosenfeld2020TheRO, Arjovsky2019InvariantRM} term the composition $w_{inv}\circ \Phi_{inv}$ as \textit{invariant classifier} that achieves the minimum possible risk without using non-causal features. We call the classifier $w_{ERM} \circ \Phi_{ERM}$ directly trained by ERM as \textit{ERM classifier}. \cite{Rosenfeld2020TheRO} constructs a common failure case of IRMv1 in Theorem 3.3. We assume that there exists environments that We denote $\gB$ as the union of balls centered at each $\mu_e$, each with a large enough radius $r=\sqrt{2\sigma_e^2d_e}$for each environment $e$, the classifier behaves like \textit{invariant classifier} on a ball $\gB$ centered at $\mu_e$.
% For clarity, we call classifier constructed above as \textit{non-invariant classifier}. Below we present a simplified version of the original theorem.
\begin{theorem}[Informal]
Under some mild assumptions, there exists a classifier that achieves near-optimal IRMv1 loss~(\Eqref{eq:irmv1}) and has high transfer risk~(\Eqref{eq:obj-robust}). In addition, for any test environment with a non-causal mean far from those in training, this classifier behaves like an ERM-trained classifier on most fractions of the test distribution.

% For any classifier with a nearly optimal transfer risk, it also has nearly optimal IRMv1 loss.
\label{thm:irm2trm}
\end{theorem}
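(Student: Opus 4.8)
The plan is to exhibit a single feature extractor–predictor pair $(\Phi,w)$ and verify the three claimed properties, reusing the non-linear construction behind Theorem~3.3 of \citet{Rosenfeld2020TheRO} for the first and third claims and supplying a new transfer-risk lower bound for the second. Because $f$ is injective, $\Phi$ can be taken to invert $f$ and thereby retain the \emph{entire} latent vector $[z_c,z_e]$; I would then route this information through a gate $g$ so that on the union of the training supports the non-causal coordinate is exposed only at a small scale $\delta$ (so the fixed predictor $w$ reads off essentially the invariant coordinate $z_c$), whereas on inputs whose non-causal coordinate is far from every training mean $\mu_i$ the gate passes $z_e$ at full scale and $w$ reverts to the ERM rule. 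Property (i), near-optimal IRMv1 loss, follows because the $w_e$-gradient of $\E_{P}[\ell(w\circ\Phi)]$ under any training environment is $\Theta(\delta)$ (the feature is present only at scale $\delta$), so the penalty $\parallel\nabla_w\E_P[\ell(w\circ\Phi)]\parallel^2$ is $O(\delta^2)$; property (iii), agreement with ERM on all but a vanishing fraction of a far test environment, is exactly the gating behaviour and is the content of \citet{Rosenfeld2020TheRO}. I would restate these with the paper's notation and cite them rather than re-derive.

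The substance is property (ii). The key observation is that a small IRMv1 penalty at the fixed $w$ does \emph{not} force the per-environment optimizers $w(Q;\Phi)=\argmin_w\E_Q[\ell(w\circ\Phi)]$ to coincide, precisely because $\Phi$ still carries the non-causal coordinate $z_e$ (it must, in order to exhibit ERM behaviour on far test environments). First I would characterize $w(Q;\Phi)$: on the coordinate that encodes the scaled $\delta z_e$, the within-environment risk is minimized by the Gaussian-discriminant weight, which is aligned with $Q$'s non-causal mean $\mu_Q$ and whose magnitude scales like $1/\delta$, since the optimizer amplifies a feature that is present only at scale $\delta$. Thus $w(Q;\Phi)$ genuinely depends on $Q$, even though the gradient at the \emph{shared} $w$ is only $\Theta(\delta)$: the landscape in the non-causal direction is flat near $w$ but has its minimizer far away and environment-specific. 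This decoupling between ``flat gradient at $w$'' and ``distant, $Q$-dependent minimizer'' is exactly what IRMv1 cannot see and what transfer risk is designed to detect.

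Next I would evaluate the transferred loss. Applying $w(Q;\Phi)$ to a target $P=\sum_i\alpha_i P_i\in\textrm{Conv}(\Omega\backslash Q)$, the non-causal contribution to the logit has conditional mean proportional to $y\,\mu_Q^\top\mu_i/\sigma_e^2$ under component $P_i$ and variance $\Theta(1)$; whenever some training environment has a non-causal mean misaligned with $\mu_Q$, this term is mis-signed on a constant fraction of the mass, so $\E_P[\ell(w(Q;\Phi)\circ\Phi)]$ is bounded below by a positive constant independent of $\delta$. Taking the supremum over $\textrm{Conv}(\Omega\backslash Q)$ selects the worst such mixture and only increases the bound, and summing over $Q\in\Omega$ yields a transfer risk $\gR(\Phi;\Omega)$ that is bounded below by a positive constant, i.e. high, whereas the same computation for a purely invariant extractor that discards $z_e$ returns only the irreducible invariant risk. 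The contrast establishes that the IRM-optimal construction is rejected by the transfer criterion.

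I expect the main obstacle to be making the decoupling in the second paragraph quantitative and uniform: one must show simultaneously that the gradient penalty at the fixed $w$ vanishes as $\delta\to0$ (so IRMv1 stays near-optimal) and that the per-environment minimizer's non-causal weight survives in the transferred loss as $\delta\to0$, i.e. that the $1/\delta$ amplification of the weight exactly compensates the $\delta$ suppression of the feature so the transfer loss remains $\Theta(1)$. This needs care with the logistic-loss landscape — controlling its curvature in the non-causal direction and ruling out that re-optimization merely rescales everything harmlessly — together with the ``mild assumptions'' guaranteeing that the training non-causal means $\{\mu_i\}$ are diverse enough to furnish an adversarial transfer target inside the convex hull. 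Handling the supremum over $\textrm{Conv}(\Omega\backslash Q)$ and the passage from a single mis-signed coordinate to a constant lower bound on the cross-entropy are routine by comparison.
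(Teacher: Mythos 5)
Your proposal follows the same overall strategy as the paper's proof — an explicit construction of a gated classifier that suppresses the non-causal coordinate on the training bulk (so the IRMv1 penalty nearly vanishes and the third claim follows from Theorem D.3 of \citet{Rosenfeld2020TheRO}), while still exposing that coordinate enough that the per-environment optimal predictors acquire distinct, mutually damaging non-causal weights — but the gate itself, and hence the mechanism, is genuinely different. The paper hard-zeros $z_e$ on $\ell_2$ balls of radius $\sqrt{2d_e}$ around every training mean \emph{except} $\mu_i$ and passes it at full scale elsewhere; the low IRMv1 penalty then comes from the non-causal feature being literally absent on all but an exponentially small tail of each environment (and from $w$ being exactly optimal on environment $i$), and the per-environment optimum $w_j \approx (2\mu_c/\sigma_c^2, 2\mu_j)$ is determined by that tail. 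You instead scale $z_e$ by $\delta$ uniformly on all training supports, get an $O(\delta^2)$ penalty by gradient suppression, and obtain a bulk-determined optimizer with non-causal weight $\sim \mu_Q/(\delta\sigma_e^2)$ whose $1/\delta$ amplification cancels the $\delta$ suppression in the transferred logit. Both then close with the same misalignment argument: under the assumption that some pair satisfies $\mu_i^\top\mu_j \le -\|\mu_c\|^2/\sigma_c^2$, the transferred logit's conditional mean is non-positive, so Jensen gives a per-term loss of at least $\ell(0)=\ln 2$, a constant far above the invariant risk. Your variant buys a cleaner characterization of $w(Q;\Phi)$ (a well-specified Gaussian discriminant on the bulk, rather than the paper's somewhat loose tail-based identification of $w_j$), at the price of two quantitative obligations the paper's hard gate avoids: (i) $\delta$ must be small enough that the penalty vanishes, yet large enough (roughly $\delta \gg e^{-\Theta(d_e)}$) that the $1/\delta$-sized weight does not blow up the expected loss on the full-scale far region when computing the per-environment optimum; and (ii) "misaligned on a constant fraction of the mass" is not by itself sufficient — you need the anti-alignment to be strong enough to drive the expected transferred logit to zero or below, which is exactly the paper's explicit assumption. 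Also note the conditional variance of the non-causal logit contribution is $\Theta(d_e)$, not $\Theta(1)$; this is harmless for the Jensen route but should be corrected.
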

We defer the formal statement and proof to Appendix~\ref{proof:irm2trm}. We prove the above theorem by constructing a classifier \textit{only} using invariant features for prediction on the high-density region but behaving like ERM solution on the tails, which can still have near-optimal IRMv1 loss. However, the per-environment optimal predictors are distinct when using ERM-solution on the tails. The discrepancy in the per-environment optimal predictors leads to large transfer risk.

\begin{table}[t]

	\begin{minipage}{0.57\linewidth}
		\centering
		\subfigure{\includegraphics[width=0.5\linewidth]{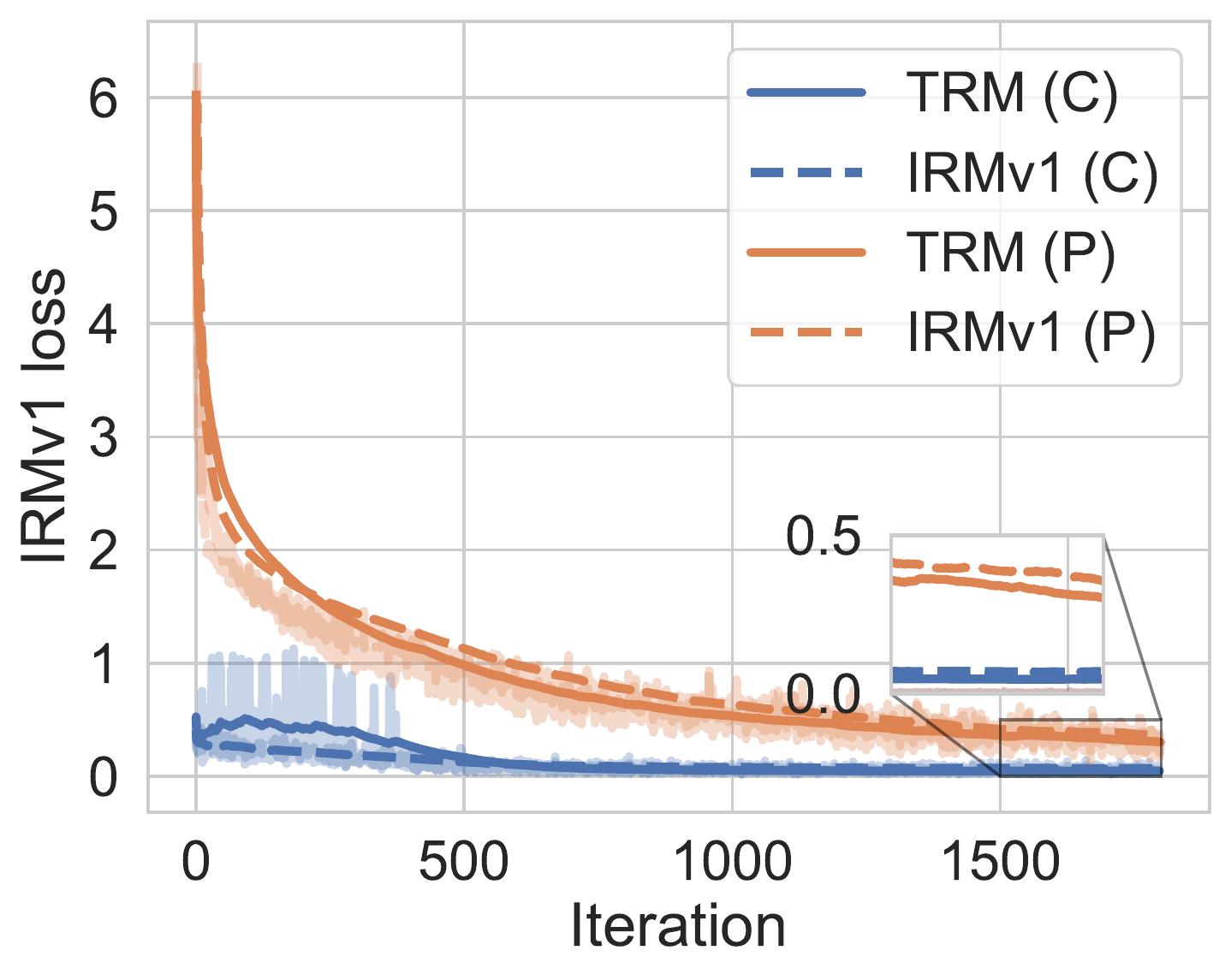}\label{img:irmv1}}%
   \subfigure{\includegraphics[width=0.5\linewidth]{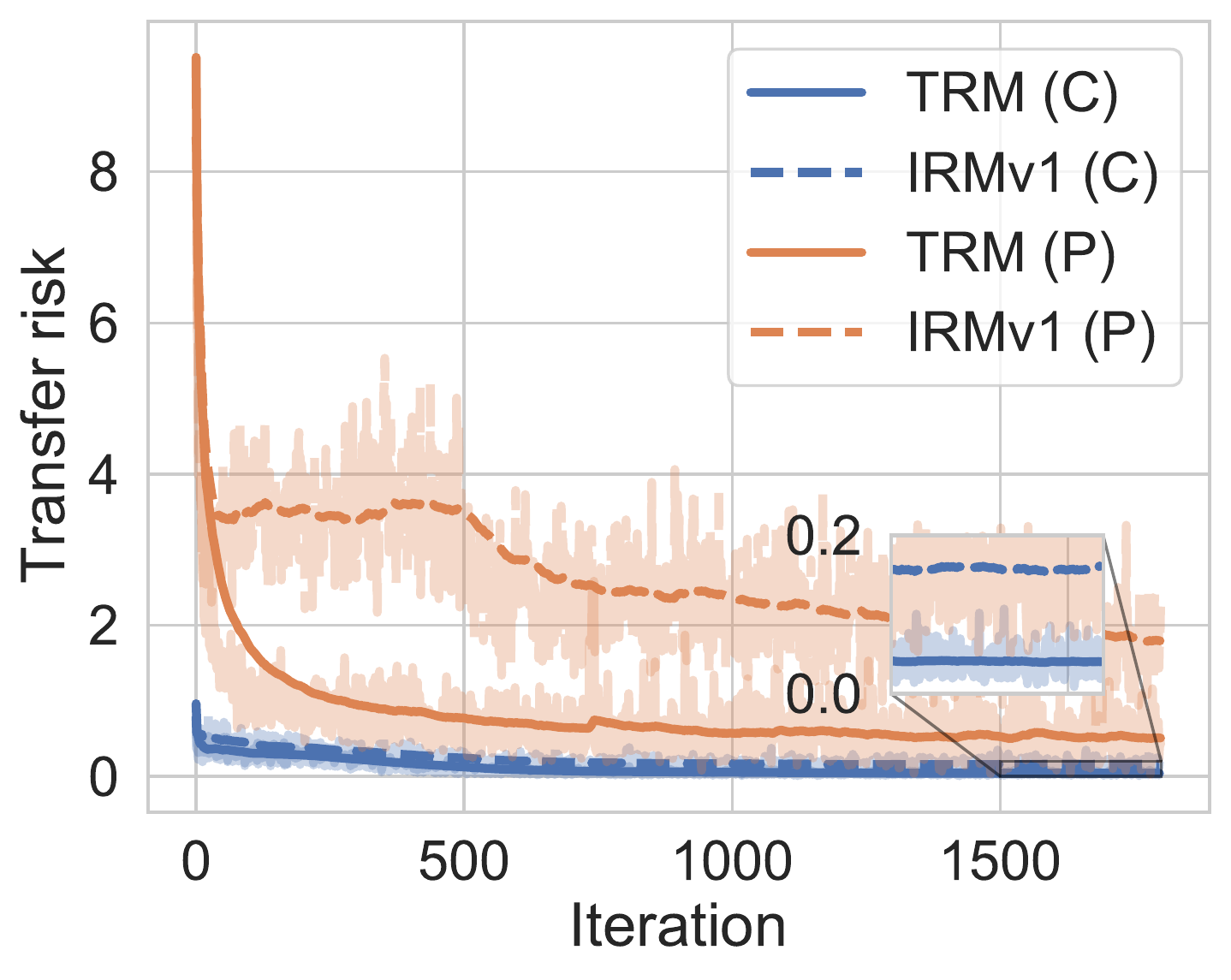}\label{img:trm}}
   \captionof{figure}{IRMv1 loss~(Left) and transfer risk~(Right) versus training iterations for models trained by IRMv1 and TRM on 10C-CMNIST~(C) and PACS~(P).}\label{img:analysis}
	\end{minipage}\hfill\hspace{5pt}
\begin{minipage}{0.42\linewidth}
		\centering
		\subfigure{\includegraphics[width=1\linewidth]{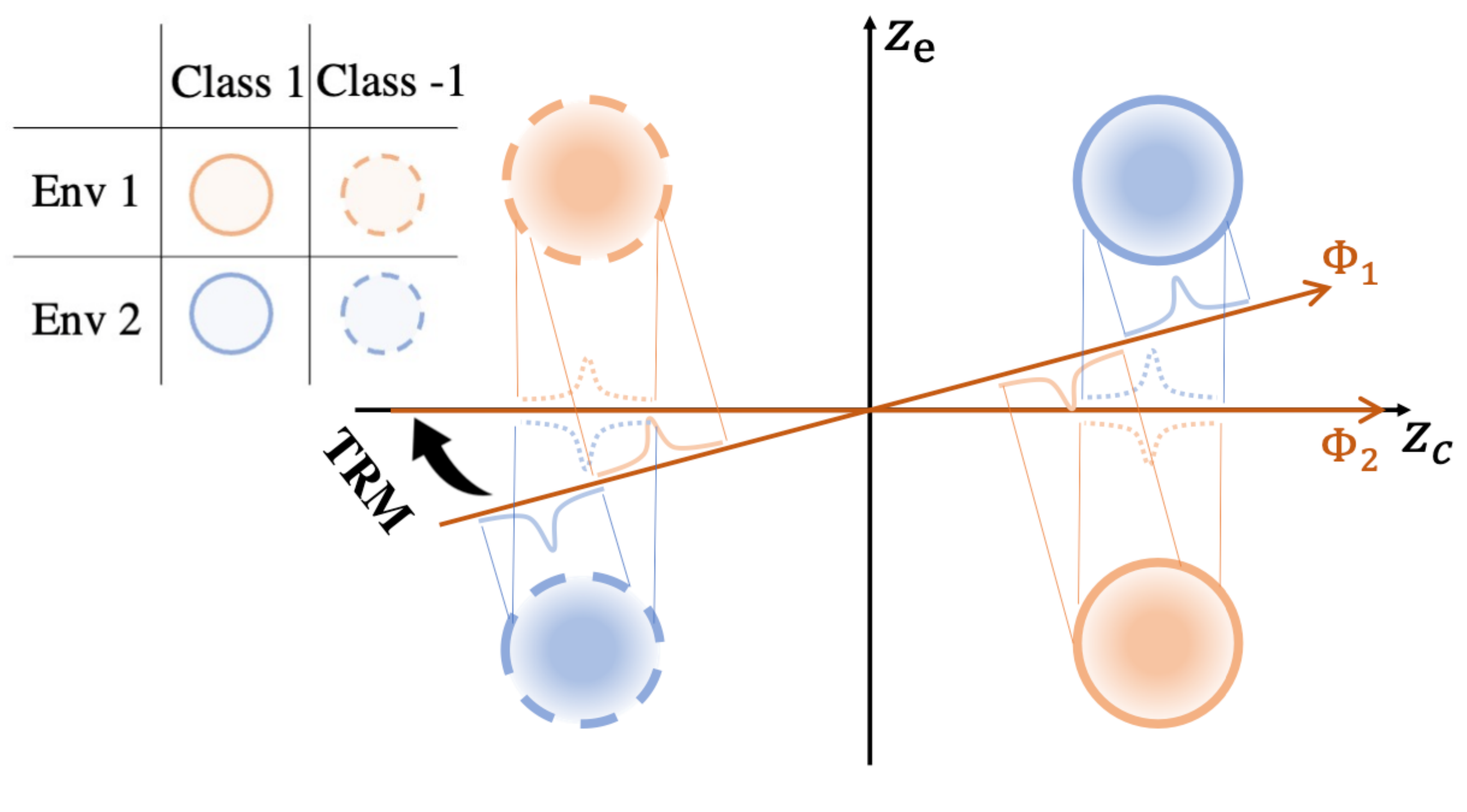}}%
   \captionof{figure}{2-d scenario of the linear case. TRM drives the non-invariant $\Phi_1$ toward the invariant $\Phi_2$.}\label{img:2d-toy}
	\end{minipage}
\end{table}

In addition to the theoretical analysis, we provide further empirical analysis to characterize the difference between TRM and IRM. \Figref{img:analysis} reports the IRMv1 loss and transfer risk on the 10C-CMNIST~(C) and PACS~(P) datasets discussed later~(\secref{sec:exp}). Although IRMv1 solutions achieve small IRMv1 losses, it has significantly higher transfer risks than TRM solutions. Conversely, TRM solutions have slightly lower IRMv1 loss than IRMv1 solutions. Besides, the out-of-distribution test accuracies on 10C-CMNIST / PACS are: $57\% / 73\%$ (IRMv1), $57\% / 74\%$~(ERM) and $78\% / 81\%$ (TRM). IRMv1 solutions have close performance to ERM solutions, while TRM outperforms others by a large margin. The empirical results support the statement in Theorem~\ref{thm:irm2trm} that models with near-optimal IRMv1 loss can have large transfer risks and behave like ERM solutions on test environments.

Together, these results suggest that transfer risk is a better criterion than IRMv1 for assessing model's out-of-distribution generalization. In \Figref{img:2d-toy}, we demonstrate the effect of TRM on a a toy 2-d example~($d_c=d_e=1$) with linear $f,\Phi$. TRM drives the non-invariant feature to the invariant one. We defer details of the analysis in the 2-d case to Appendix~\ref{app:toy}.

% \begin{figure*}[t]
%     \centering
%     \subfigure[IRMv1 loss]{\includegraphics[width=0.33\linewidth]{img/trm_figure.jpg}\label{img:trm_figure}}%
%   \subfigure[IRMv1 loss]{\includegraphics[width=0.33\linewidth]{img/irmv1_loss_2.pdf}\label{img:irmv1}}% \hi
%   \subfigure[Transfer risk]{\includegraphics[width=0.33\linewidth]{img/transfer_risk_2.pdf}\label{img:trm}}%
% %   \subfigure[Distance versus iteration]{\includegraphics[width=0.28\linewidth]{img/l2_dist.pdf}\label{fig:l2}}
%     \caption{\textbf{(a)} IRMv1 loss and \textbf{(b)} Transfer risk versus training iterations for models trained   by IRMv1 and TRM on 10C-CMNIST~(C) and PACS~(P). The out-of-distribution test accuracies on 10C-CMNIST / PACS are: $57\% / 73\%$ (IRMv1), $57\% / 74\%$~(ERM) and $78\% / 81\%$ (TRM).}
%     \label{img:analysis}
%     \vspace{-3pt}
% \end{figure*}

% \section{Analysis: A linear case}
% \input{analysis}
\section{Method}
In this section, we discuss how to optimize the TRM objective~(\Eqref{eq:obj-robust}). We first introduce an exponential gradient ascent algorithm for optimizing the inner supremum, and then discuss how to optimize the feature extractor with per-environment optimal predictor. An alternating updating algorithm incorporating these steps is summarized in \Algref{alg:trm}.
\subsection{Transfer Risk Optimization}
\paragraph{Solving the inner sup}
Assume $E$ different environments with associated densities $\Omega = \{P_1,P_2,\dots,P_E\}$. Given $Q$, we find the corresponding worst-case environment $P$ in the inner max of \Eqref{eq:obj-robust}. The search space for $P$ is the convex hull of all environment distributions with the exception of $Q$:
$P(Q)\in \textrm{Conv}(\Omega \backslash Q) = \left\{\sum_{P_i \in \Omega\backslash Q} \mathbf{\alpha}_i(Q)P_i\mid \alpha_i(Q) \ge 0, \parallel \mathbf{\alpha}(Q)\parallel_1=1\right\}$. 
Since the optimization is over a simplex, the solution can be found exactly by just selecting the worst environment in $\Omega$: $P=\argmax_{P\in \Omega}\E_{P}[\ell(w(Q) \circ \Phi)]$. Empirically, we find that updating $\alpha$ by gradient ascent instead of selecting the worst environment leads to a more stable training process. This has been observed in related contexts \citep{Sagawa2019DistributionallyRN}.

The gradient for $\mathbf{\alpha}_i$ is $\E_{P_i}[\ell(w(Q) \circ \Phi)]$, indicating that the inner supremum simply up-weights the environments with larger losses relative to the predictor $w(Q)$. We adopt an exponential gradient ascent algorithm (EG) for the updates: 
\begin{align*}
    &\mathbf{\alpha}_i(Q) = \textrm{EG}(\mathbf{\alpha}(Q),\eta_{\mathbf{\alpha}}, \Ls(Q))_i\numberthis \label{eq:eg}
    = \mathbf{\alpha}_i(Q)\exp(\eta_{\mathbf{\alpha}}\frac{\partial \Ls(Q)}{\partial \mathbf{\alpha}_i(Q)}) / \sum_{P_i \in  \Omega\backslash Q} \mathbf{\alpha}_i(Q)\exp(\eta_{\mathbf{\alpha}}\frac{\partial \Ls(Q)}{\partial \mathbf{\alpha}_i(Q)})
\end{align*}
where $\eta_{\alpha}$ is the learning rate, and the subscript $i$ denotes the $i$th component of the vector.

\paragraph{Updating the feature extractor $\Phi$} Given $Q$ and the corresponding worst-case environment $P(Q)$, we consider here how to update the feature extractor $\Phi$ in \Eqref{eq:obj-robust}. Denote the risk of using predictor $w(Q;\Phi)$ with data distribution $P(Q)$ as $\Ls_P(Q)=\E_{P(Q)}[\ell(w(Q;\Phi) \circ \Phi)]$. Recall that the optimal predictor $w(Q;\Phi)$ is an implicit function of the feature extractor $\Phi$, \ie $
    w(Q;\Phi) = \argmin_w   \E_{Q}[\ell(w \circ \Phi)]
$. In the remainder, we use a shorthand $w(Q)$ to refer to the predictor $\mathtt{sg}(w(Q;\Phi))$, where $\mathtt{sg}$ stands for the \texttt{stop\_gradient} operator. In other words, the value of $w(Q)$ follows $\Phi$ but it's partial derivatives w.r.t $\Phi$ are set to zero. It is helpful to distinguish $w(Q)$ from $w(Q;\Phi)$ to clarify what is meant by the different expressions below. Now, 
% Implicit functions that are defined through certain optimality constraint is ubiquitous in machine learning, such as neural architecture search~\cite{Liu2019DARTSDA}, hyper-parameter optimization~\cite{Bengio2000GradientBasedOO,Lorraine2020OptimizingMO} and meta learning~\cite{Rajeswaran2019MetaLearningWI}.
the full gradient of the transfer risk w.r.t $\Phi$ comprises two terms since $w(Q;\Phi)$ also depends on $\Phi$ 
\begin{align*}
    &\frac{\diff \Ls_P(Q)}{\diff \Phi}  = \underbrace{\frac{\partial\Ls_P(Q)}{\partial \Phi}}_\textrm{{direct gradient}}+ \underbrace{(\frac{\partial \Ls_P(Q)}{\partial w(Q;\Phi)})^T \frac{\diff w(Q;\Phi)}{\diff \Phi}}_\textrm{{implicit gradient}}\label{eq:phi-decom}\numberthis
\end{align*}
% The first term in \Eqref{eq:phi-decom} encourages the stability of $\Phi$ under the transfer of per-environment predictor. The second term is more involved, since $w(Q)$ is implicitly determined by $\Phi$ through the optimality of $w(Q)$ on environment $Q$:
We show in Proposition~\ref{prop:implicit} that the implicit gradient can be further simplified as a weighted gradient-matching term.
\begin{restatable}{proposition}{implicit}
% \begin{proposition}[Implicit function theorem]
\label{prop:implicit}
Denote the Hessian as $H_{w(Q)}=\frac{\partial^2 \E_{Q}[\ell(w(Q) \circ \Phi)]}{\partial w(Q)^2}$. Suppose the loss $\ell(w \circ \Phi)$ is continuously differentiable and $H_{w(Q)}$ is non-singular, then we have:
\begin{align*}
    &(\frac{\partial \Ls_P(Q)}{\partial w(Q;\Phi)})^T \frac{\diff w(Q;\Phi)}{\diff \Phi} = -\frac{\partial (\mathtt{sg}(v_Q)^T\frac{\partial \E_{Q}[\ell(w(Q) \circ \Phi )]}{\partial w(Q)})}{\partial \Phi}
\end{align*}
where $v_Q=(\frac{\partial \Ls_P(Q)}{\partial w(Q)})^T H_{w(Q)}^{-1}$, and is treated as a constant vector in the above equation (note the use of stop-gradient version $w(Q)$).
% \end{proposition}
\end{restatable}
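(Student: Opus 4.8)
The plan is to obtain $\diff w(Q;\Phi)/\diff\Phi$ via the implicit function theorem applied to the first-order stationarity condition that \emph{defines} $w(Q;\Phi)$, thereby avoiding any explicit solution of the inner $\argmin$. Since $w(Q;\Phi)=\argmin_{w}\E_{Q}[\ell(w\circ\Phi)]$, the gradient of the inner objective vanishes at the minimizer for \emph{every} $\Phi$:
\begin{align*}
\frac{\partial \E_{Q}[\ell(w\circ\Phi)]}{\partial w}\Big|_{w=w(Q;\Phi)} = 0.
\end{align*}
This is an identity in $\Phi$, and differentiating an identity is exactly what lets us extract the sensitivity of the minimizer.

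First I would differentiate this identity totally with respect to $\Phi$. Because $w(Q;\Phi)$ itself depends on $\Phi$, the chain rule yields two contributions: the explicit dependence of the inner gradient on $\Phi$, captured by the mixed second derivative, and the dependence routed through the minimizer, captured by the Hessian $H_{w(Q)}$. This gives
\begin{align*}
\frac{\partial^2 \E_{Q}[\ell]}{\partial w\,\partial\Phi} + H_{w(Q)}\,\frac{\diff w(Q;\Phi)}{\diff\Phi} = 0.
\end{align*}
Here continuous differentiability of $\ell$ makes the mixed partials well defined, and non-singularity of $H_{w(Q)}$ lets me invert to solve $\diff w(Q;\Phi)/\diff\Phi = -H_{w(Q)}^{-1}\,\partial^2\E_{Q}[\ell]/(\partial w\,\partial\Phi)$.

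Next I would substitute this Jacobian into the implicit-gradient term of \Eqref{eq:phi-decom} and absorb the row vector $(\partial\Ls_P(Q)/\partial w(Q))^{T}H_{w(Q)}^{-1}$ into the shorthand $v_Q$, noting that $\partial\Ls_P(Q)/\partial w(Q;\Phi)$ and $\partial\Ls_P(Q)/\partial w(Q)$ denote the same partial derivative in the predictor argument (the stop-gradient $w(Q)$ only freezes the gradient, not the value). This produces
\begin{align*}
\Big(\frac{\partial \Ls_P(Q)}{\partial w(Q;\Phi)}\Big)^{T} \frac{\diff w(Q;\Phi)}{\diff\Phi} = -\,v_Q^{T}\,\frac{\partial^2 \E_{Q}[\ell]}{\partial w\,\partial\Phi}.
\end{align*}
Finally I would read the right-hand side back as a single $\Phi$-derivative of a surrogate scalar: since $w(Q)=\mathtt{sg}(w(Q;\Phi))$ has zero partial derivative with respect to $\Phi$, differentiating $\mathtt{sg}(v_Q)^{T}\,\partial\E_{Q}[\ell(w(Q)\circ\Phi)]/\partial w(Q)$ touches only the \emph{explicit} feature argument of $\ell$ and reproduces precisely $v_Q^{T}\,\partial^2\E_{Q}[\ell]/(\partial w\,\partial\Phi)$, which is the claimed identity.

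The step I expect to be the main obstacle is this last stop-gradient bookkeeping: I must argue carefully that freezing \emph{both} $v_Q$ and the inner $w(Q)$ forces $\partial/\partial\Phi$ to act solely through the explicit feature argument, so that the derivative collapses to the mixed partial rather than a total derivative — if the dependence of $w(Q)$ on $\Phi$ were not stopped, an extra $H_{w(Q)}$ term would reappear and the identity would fail. A secondary point to verify is the consistency of the transpose and shape conventions for $v_Q$, so that the contraction $v_Q^{T}(\partial^2\E_{Q}[\ell]/\partial w\,\partial\Phi)$ matches the vector--Jacobian product on the left-hand side.
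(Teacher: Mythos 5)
Your proposal is correct and follows essentially the same route as the paper's proof: both differentiate the first-order optimality condition defining $w(Q;\Phi)$ (i.e., apply the implicit function theorem) to get $\diff w(Q;\Phi)/\diff\Phi = -H_{w(Q)}^{-1}\,\partial^2 \E_{Q}[\ell]/(\partial w\,\partial\Phi)$, substitute this into the vector--Jacobian product, and read the result back as the $\Phi$-derivative of the stop-gradient surrogate. The only difference is that you spell out the stop-gradient bookkeeping in the final identification, which the paper simply asserts.
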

We can interpret the numerator of RHS as a gradient-matching objective: it measures the similarity of gradients depending on whether the distribution over which the loss is measured is $Q$ or $P(Q)$, weighted by the Hessian inverse $H_{w(Q)}^{-1}$. It shows that TRM naturally aims to find a representation $\Phi$ where the gradients are matched when moving from $Q$ to $P$ (cf. ~\citep{Shi2021GradientMF}). By integrating, we can write down an objective function whose gradient with respect to $\Phi$ matches Eq.~\ref{eq:phi-decom}: 
\begin{align*}
  \int \frac{\diff \Ls_P(Q)}{\diff \Phi} \diff\Phi =\underbrace{\E_{P}[\ell(w(Q) \circ \Phi)]}_\textrm{{direct transfer term}}\hspace{5pt}-\hspace{5pt} \underbrace{\mathtt{sg}(v_Q)^T\frac{\partial \E_{Q}[\ell(w(Q) \circ \Phi )]}{\partial w(Q)}}_\textrm{\vspace{-100pt}{weighted gradient-matching term}}\hspace{5pt} + \underbrace{C}_\textrm{\vspace{-5pt}{constant term}} \label{eq:int}\numberthis
\end{align*}
Note the use of stop-gradient versions $w(Q)$ in these expressions. Effectively, the TRM objective for $\Phi$ decomposes into two terms: (i) the direct transfer term, which encourages predictor $w(Q)$ to do well even if the distribution were $P(Q)$, and (ii) the weighted gradient-matching term. The second term attempts to match the gradient of $w$ in the original environment $Q$ and worst-case environment $P(Q)$ by updating the features. The weighted gradient-matching term plays a role analogous to meta-learning~\citep{Li2018LearningTG,Shi2021GradientMF} encouraging simultaneous loss descent. Note that the weighted gradient-matching term actually evaluates to zero at the current value of $\Phi$ since $w(Q)$ is set to the per-environment optimal value, but the gradient of this term with respect to $\Phi$ is not zero.

% \paragraph{Remark} The weighted gradient matching is the  related to the influence function, a classic technique in robust statistics~\citep{Cook1980CharacterizationsOA} that measures the dependence of the model parameters on the value of training points. \textcolor{blue}{\citet{Ye2021OutofDistributionGA} shows that the influence function is a promising tools for selecting models with good out-of-distribution generalization.} Consider the optimal predictor $w(Q)$ on distribution $Q$. The influence of changing the training distribution $Q$ slightly to $(1-\epsilon)Q+\epsilon P$, by some infinitesimal amount $\epsilon$, has the following expression:
% % \begin{align*}
% % &\frac{d\E_{Q}[\ell(w(Q) \circ \Phi )]}{d\epsilon}=\\
% %     &-\frac{\partial \ell(w(Q) \circ \Phi ;X)}{\partial w}^T(H_{w(Q)}^x)^{-1}\frac{\partial \E_{Q}[\ell(w(Q) \circ \Phi )]}{\partial w}
% % \end{align*}
% \begin{align*}
% &\frac{\diff \E_{Q}[\ell(w(Q) \circ \Phi )]}{\diff\epsilon}=v_Q^T\frac{\partial \E_{Q}[\ell(w(Q) \circ \Phi )]}{\partial w(Q)}
% \end{align*}
%  \textcolor{blue}{The corresponded influence function is the weighted gradient matching risk. Intuitively, feature extractors with better out-of-distribution generalization will have smaller influence when changing the underlying training distribution, which justifies the weighted gradient-matching term.}

\subsection{Approximation of Inverse Hessian Vector Product}

We denote the number of parameters in $w$ as $P$ and the gradient $\frac{\partial \Ls_P(Q)}{\partial w(Q)}$ as $v$. Dropping the $w({Q})$ subscript for clarity, the weight gradient matching term in \Eqref{eq:int} involves the computation of inverse Hessian vector product $v_Q=H^{-1}v$. For minibatch data $\mathcal{B}(x)$, computing the inverse Hessian $H^{-1}$ requires $O(|\mathcal{B}(x)|P^2 + P^3)$ operations. To avoid heavy computation, we use the similar approach in \citet{Agarwal2017SecondOrderSO} to get good approximations by Taylor expansion and efficient Hessian-vector product~(HVP)~\citep{Pearlmutter1994FastEM}. Let $H_{j}^{-1} = \sum_{i=0}^j(I-H)^i$ be the first $j$ terms in the Taylor expansion of $H^{-1}$. Note that $\lim_{j\to \infty}H_{j}^{-1}=H^{-1}$. We can solve the corresponding matrix vector product $H_{j}^{-1}v = \sum_{i=0}^j(I-H)^iv$ in linear time by recursively computing $(I-H)^iv$ with fast HVP. These computation are easy to implement in auto-grad systems like PyTorch~\citep{Paszke2019PyTorchAI}. 
\subsection{Algorithm}
In addition to the TRM objective in \Eqref{eq:obj-robust}, we include standard ERM term $\E_Q[\ell (w_{all}\circ\Phi)]$ for updating the predictor $w_{all}$ on the top of features. Overall, given the distributions $Q,P(Q)$, the per-environment objective for updating the $(\Phi,w_{all})$ pair consists of three terms:
\begin{align*}
    \hspace{-5pt}\gR(\Phi,w_{all}; Q)=
    \E_Q[\ell (w_{all}\circ\Phi)]+ \E_{P}[\ell(w(Q) \circ \Phi)] -\lambda\mathtt{sg}(v_Q)^T\frac{\partial \E_{Q}[\ell(w(Q) \circ \Phi )]}{\partial w(Q)} \numberthis\label{risk:per-env}
\end{align*}
where $\lambda$ is a hyper-parameter for adjusting the gradient-matching term to have the same gradient magnitude as the other terms. We interleave gradient updates on the model parameters ($\Phi,w_{all}$) and the environmental weights $\{\mathbf{\alpha}(Q)\mid Q\in \Omega\}$, as shown in \Algref{alg:trm}. 

\begin{algorithm}[htb]
   \caption{TRM algorithm}
   \label{alg:trm}
\begin{algorithmic}
   \STATE {\bfseries Input:} Inital model parameters $\Phi,w_{all}$, learning rates $\eta_{\phi},\eta_{w},\eta_{\mathbf{\alpha}}$. and environment set $\Omega$
   \FOR{$t=1$ {\bfseries to} $T$}
    \STATE Randomly pick a environment $Q\in \Omega$
    \STATE Get the optimal $w(Q)$ on $Q$
    \STATE Update the model parameters:
    
    \quad $\Phi^t \gets\Phi^{t-1} -\eta_{\phi}\nabla \gR(\Phi,w_{all};Q)$, $w_{all}^t \gets w_{all}^{t-1} - \eta_{w}\nabla \gR(\Phi,w_{all};Q) $ 
    \vspace{2pt}
    \STATE Update the environmental weights by \Eqref{eq:eg}:
    
    \quad $\mathbf{\alpha}(Q) \gets \textrm{EG}(\mathbf{\alpha}(Q),\eta_{\mathbf{\alpha}}, \Ls(Q))$
   \ENDFOR
\end{algorithmic}
\end{algorithm}

\Algref{alg:trm} updates $\alpha(Q)$ in an online manner. With some convexity, boundness, and smoothness assumptions, we can prove that the on-line updating has a convergence rate of $\mathcal{O}({1}/{\sqrt{T}})$ by using the techniques in \citet{Nemirovski2009RobustSA}. We defer more discussions to Appendix~\ref{app:converge}.

\section{Experiments}
\label{sec:exp}
In our experiments, we focus on the out-of-distribution generalization tasks. We first evaluate our method on two synthesized datasets (10C-CMNIST, SceneCOCO). We simulate three kinds of domain shifts by controlled experiments. Next, we evaluate all the methods on two real-world datasets (PACS, Office-Home). We compare \textbf{TRM} with standard empirical risk minimization (\textbf{ERM}), and recent methods developed for out-of-distribution generalization: \textbf{IRM}~\citep{Arjovsky2019InvariantRM}, \textbf{REx}~\citep{Krueger2020OutofDistributionGV},  \textbf{GroupDRO}~\citep{Sagawa2019DistributionallyRN}, \textbf{MLDG}~\citep{Li2018LearningTG} and \textbf{Fish}~\citep{Shi2021GradientMF}. We also use ERM trained with data sampled from the test domain to serve as an upper bound (\textbf{Oracle}).

{Experiments in the main body use training-domain validation sets for hyper-parameter selection, which are arguably more practical for out-of-distribution generalization task~\citep{ahmed2021systematic,Gulrajani2020InSO,Krueger2020OutofDistributionGV}. We defer the results of the test-domain validation set to Appendix~\ref{app:extra-exp}. We also show the efficacy of TRM on group distributional robustness in Appendix~\ref{app:dro}.}
\subsection{Experiments on 10C-CMNIST and SceneCOCO}
\label{exp:synthetic}
\subsubsection{Datasets}
\label{exp:dataset}
Evaluating the out-of-distribution generalization performance in an unambiguous manner necessitates controlled experiments. We synthesize the data by three latent features: (i) invariant~(causal) feature, (ii) non-causal feature, which is spuriously correlated with labels and (iii) the dummy feature, which is not predictive of the labels. We conduct the controlled experiments on two synthetic datasets:
\vspace{-6pt}
\paragraph{10C(lasses)-C(olored)MNIST}\hspace{-5pt}is a more general 10 classes version of the 2-classes ColoredMNSIT~\citep{Arjovsky2019InvariantRM}. We add the digit colors and background colors to allow for the domain shifts. Specifically, we set the invariant/non-causal/dummy features to digit/digit color/background color respectively. We randomly select ten colors as the digit colors and five colors as the background colors. 10C-CMNIST contains 60000 datapoints of dimension (3,28,28) from 10 digit classes.
\vspace{-6pt}
\paragraph{SceneCOCO} \hspace{-5pt}superimposes the objects from the COCO datasets~\citep{Lin2014MicrosoftCC} on the background scenes from the Places datasets~\citep{Zhou2018PlacesA1}. Following~\citet{ahmed2021systematic}, we select 10 objects and 10 scenes from above two datasets. We set the invariant/non-causal/dummy features to object/background scene/object color. This dataset consists of 10000 datapoints of dimension $(3,64, 64)$ from 10 object classes.

In addition, we define a measurement of the correlations between the label and the non-causal features. Note that there is a one-to-one corresponding between the label and the non-causal features, \eg ``2" $\leftrightarrow$ ``blue digit color” in 10C-CMNIST and ``boat" $\leftrightarrow$ ``beach scene" in SceneCOCO. For each environment, we define the \textit{bias degree} to be the ratio of the data that obeys this relationship. Those data which don’t follow this relationship are then assigned with random non-causal features. In each training environment, the data is generated by environmental-specific combination of features and bias degree. This setting is commonly adopted in existing literature~\citep{Arjovsky2019InvariantRM, Krueger2020OutofDistributionGV,ahmed2021systematic}. The label $y$ is set as the class where the invariant feature lies. 
\subsubsection{Controlled Scenarios}
\label{sec:sce}
\begin{figure*}[tb]
\centering
  \subfigure[Label-correlated shift]{\includegraphics[width=0.48\linewidth]{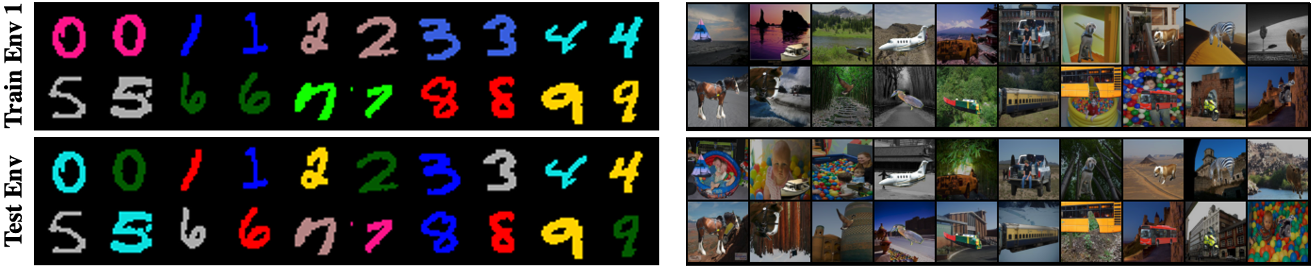}}\hfill%
    \subfigure[Combined shift]{\includegraphics[width=0.48\linewidth]{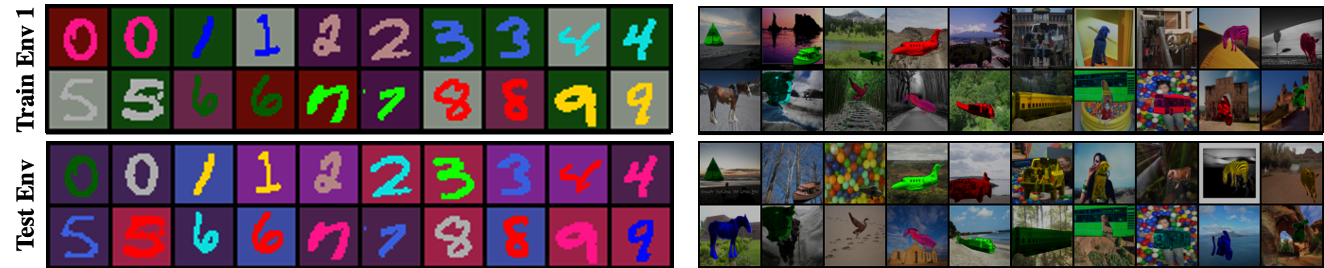}}\vspace{-5pt}
    \caption{Visualization of the training environment 1 (\textbf{top}) and the test environment (\textbf{bottom}) of 10C-CMNIST and SceneCOCO on (a) Label-correlated shift and (b) Combined shift.}
    \label{img:vis}
    \vspace{-10pt}
\end{figure*}

\begin{figure*}[t]
    \centering
%   \subfigure[Label-correlated shift]{\includegraphics[width=0.25\linewidth]{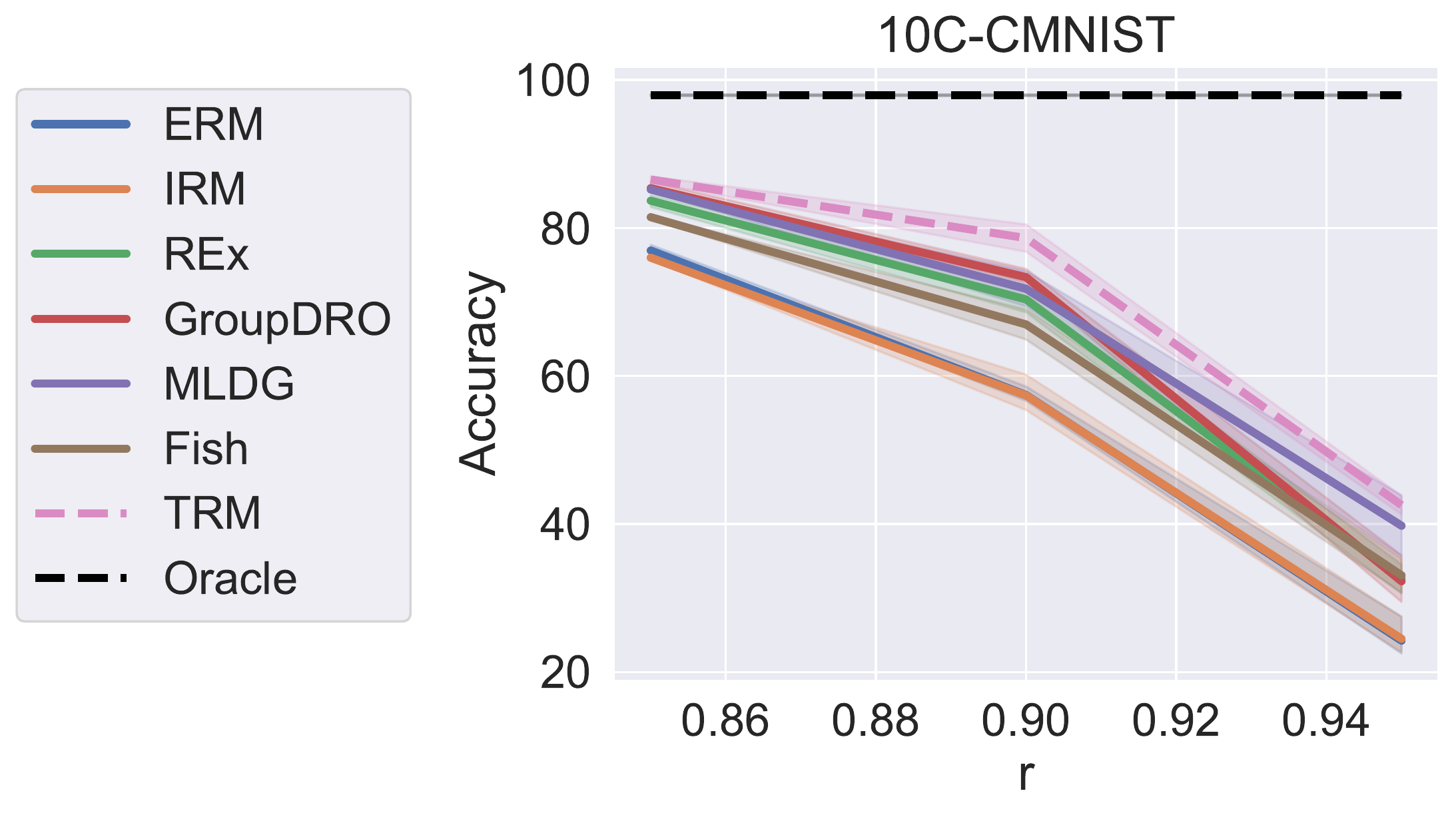}\includegraphics[width=0.26\linewidth]{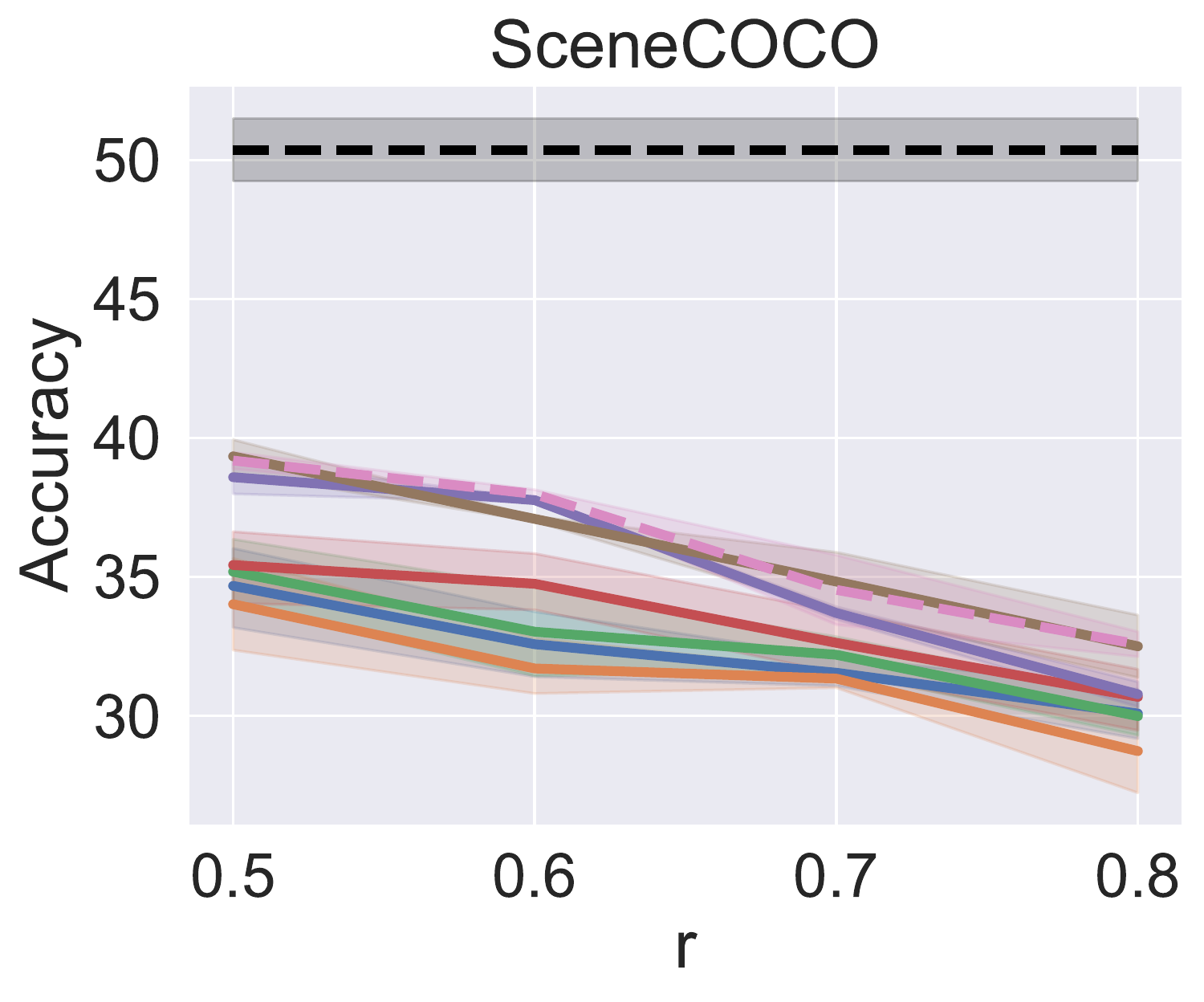}}%
    \subfigure[Label-correlated shift]{\includegraphics[width=0.33\linewidth]{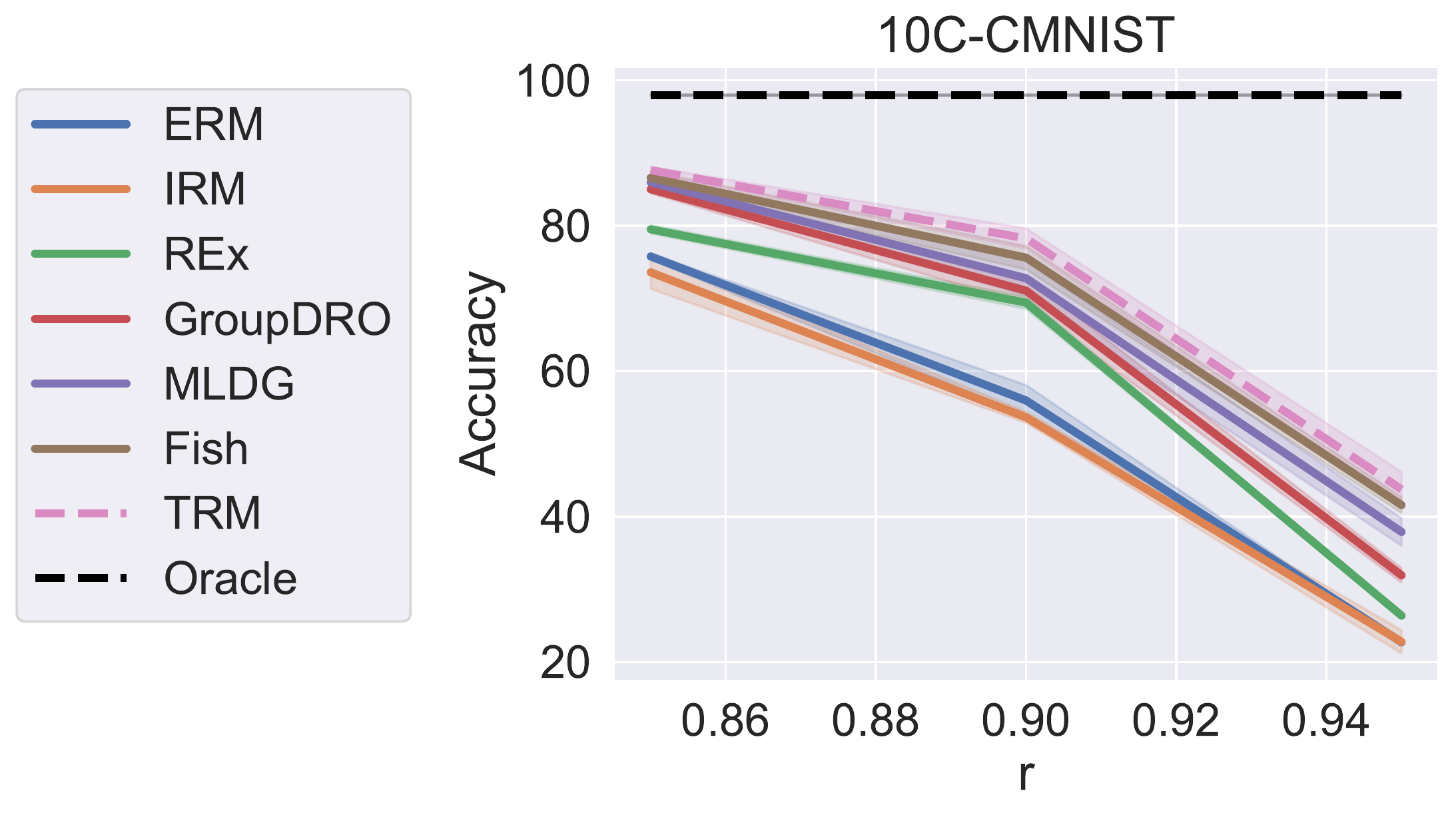}\includegraphics[width=0.225
    \linewidth]{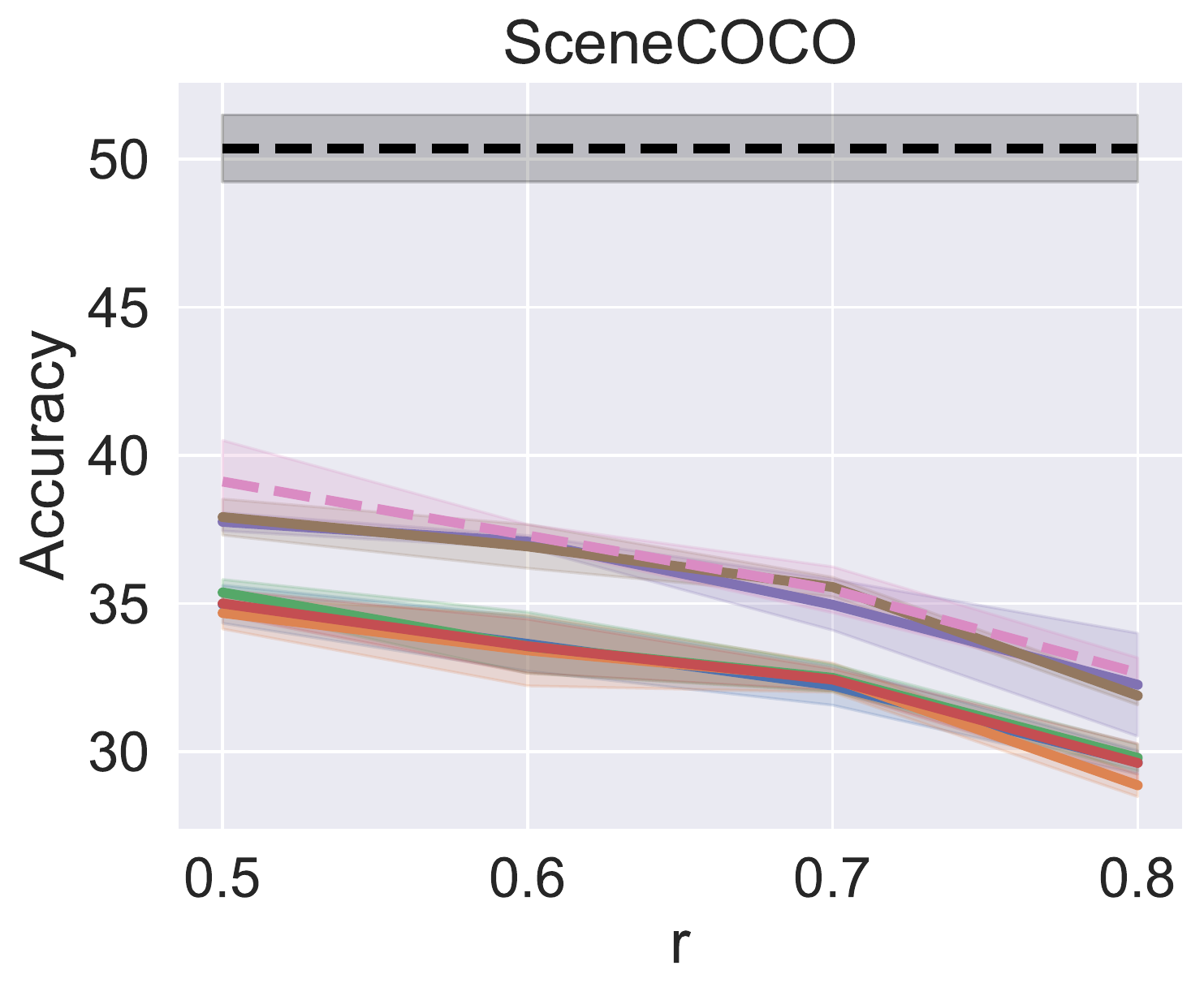}}%
   \subfigure[Combined shift]{\includegraphics[width=0.23\linewidth]{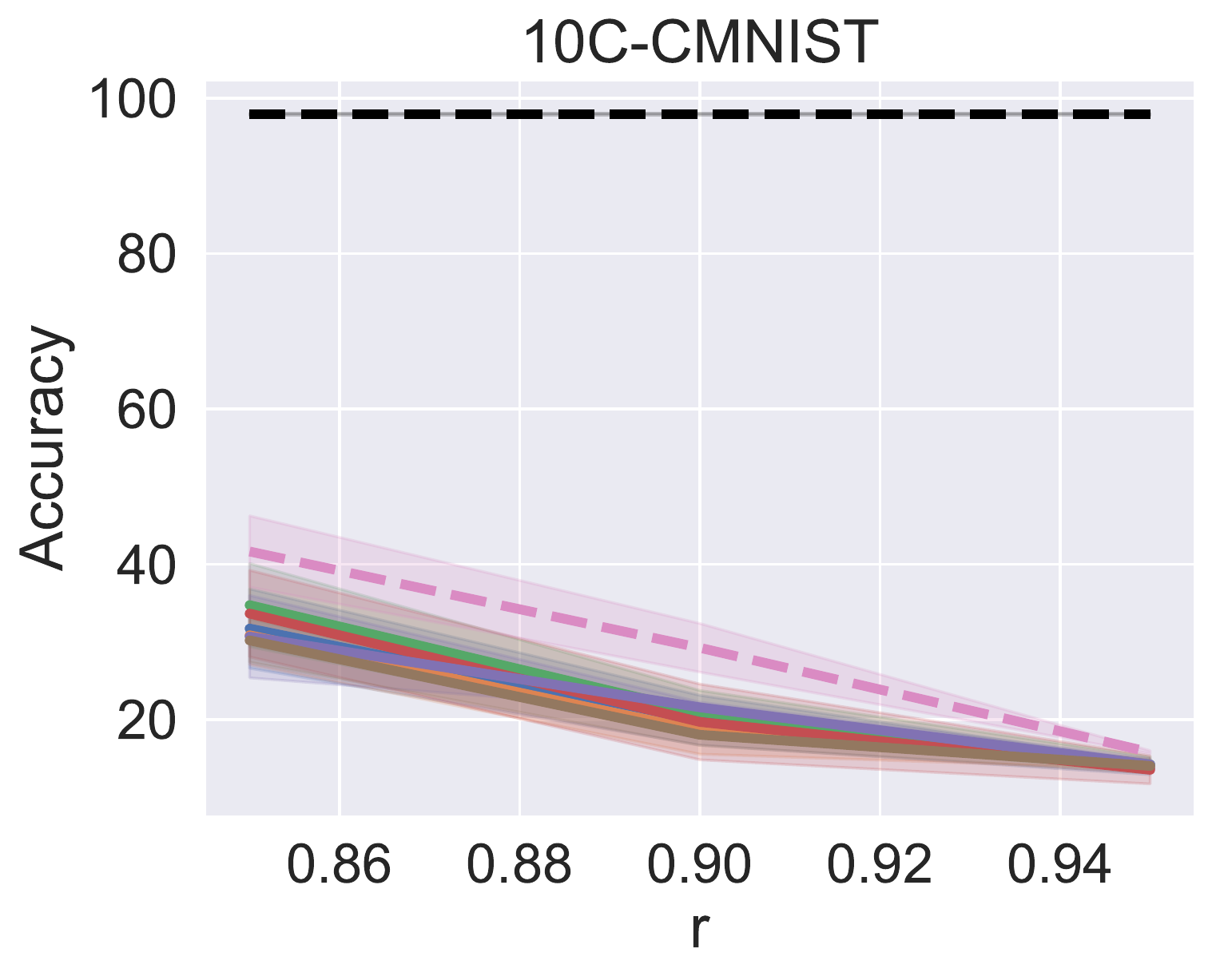}\includegraphics[width=0.225\linewidth]{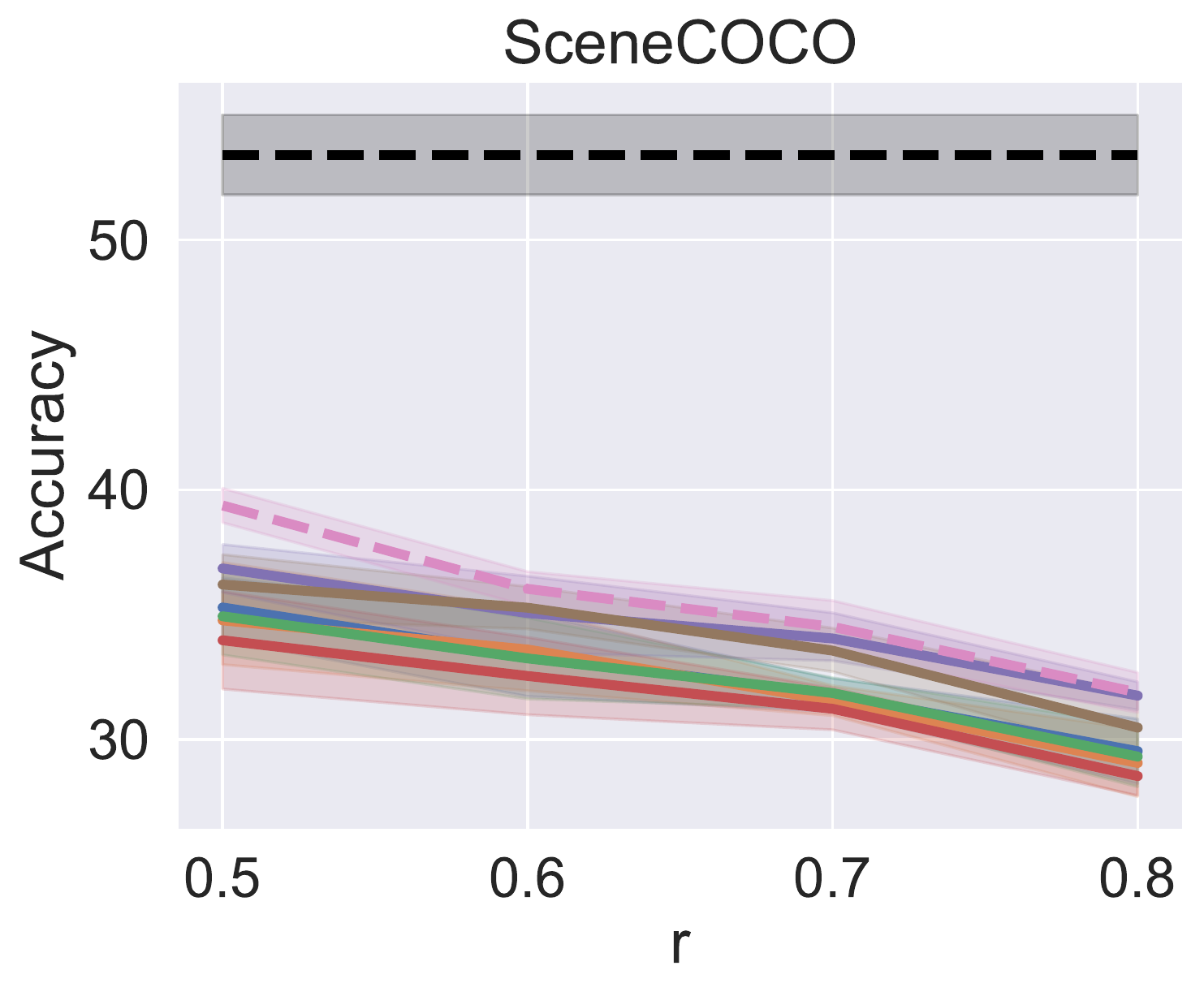}}%
    \caption{Test accuracy on 10C-CMNIST and SceneCOCO datasets in (a) Label-correlated shift and (b) Combined shift, using various bias degrees $r$. }
    \label{img:correlated}
    \vspace{-5pt}
\end{figure*}
Next, we consider two scenarios with distinct combinations of latent features.

\paragraph{Label-correlated shift}
In this scenario, each training environment is assigned with a different non-zero bias degree. The dummy feature is set to a constant, \emph{e.g.} black background color in 10C-CMNIST. The bias degree is set to zero in the test environment for evaluating how much extent the model has learn the invariant feature.
% \textcolor{blue}{The label correlated shift represents for the common scenario where non-causal features can severely degrade the model performance~\cite{Arjovsky2019InvariantRM,Sagawa2019DistributionallyRN,Sagawa2020AnIO}.}

% \paragraph{Label-uncorrelated shift}
% For each training/test environment, we uniformly sampled the dummy features $z_d$ from a environmental-specific distribution while setting the bias degree to zero. This scenario evaluates the robustness of algorithms under traditional covariate shift without bias.
\vspace{-5pt}
\paragraph{Combined shift}
In this scenario, training environments have varying non-zero bias degrees and prior distributions of dummy feature. The test environment is unbiased. It simulates the joint effects of the shifts of non-causal and dummy features.

{\Figref{img:vis} visualizes the label-correlated and combined shifts. We use two training environments in the controlled experiments. To better evaluate the robustness of algorithms, we vary the bias degrees in the second training environment. We use $100\%$/$r\%$ bias degrees in 10C-CMNIST for the first/second training environment respectively, and $90\%$/$r\%$ in SceneCOCO. $r\%$ is ranging from $85\%$ to $95\%$ in 10C-CMNIST and $50\%$ to $80\%$ in SceneCOCO. We use different configurations for the two datasets because SceceCOCO is more complex. The biased degree is $0\%$ in the test environment.}

We adopt a 4-layer CNN/Wide ResNet~\citep{Zagoruyko2016WideRN} as the feature extractor for 10C-CMNIST/SceneCOCO, following prior work~\citep{ahmed2021systematic}. We train for 10/100 epochs on 10C-CMNIST/SceneCOCO, both using batch size 128 and SGD with $0.1$ initial learning rate and 0.9 momentum. 
% We decay the learning rate with a constant $0.1$ at $4$-th/$40$-th epoch on 10C-CMNSIT/SceneCOCO. 
For more details of datasets, hyper-parameter selection and training, please refer to Appendix~\ref{app:exp}. 

\subsubsection{Results}
%  \begin{wrapfigure}[11]{r}{0.5\linewidth}
%   \vspace{-40pt}
% 	\centering
% 	\subfigure{\includegraphics[width=0.5\linewidth]{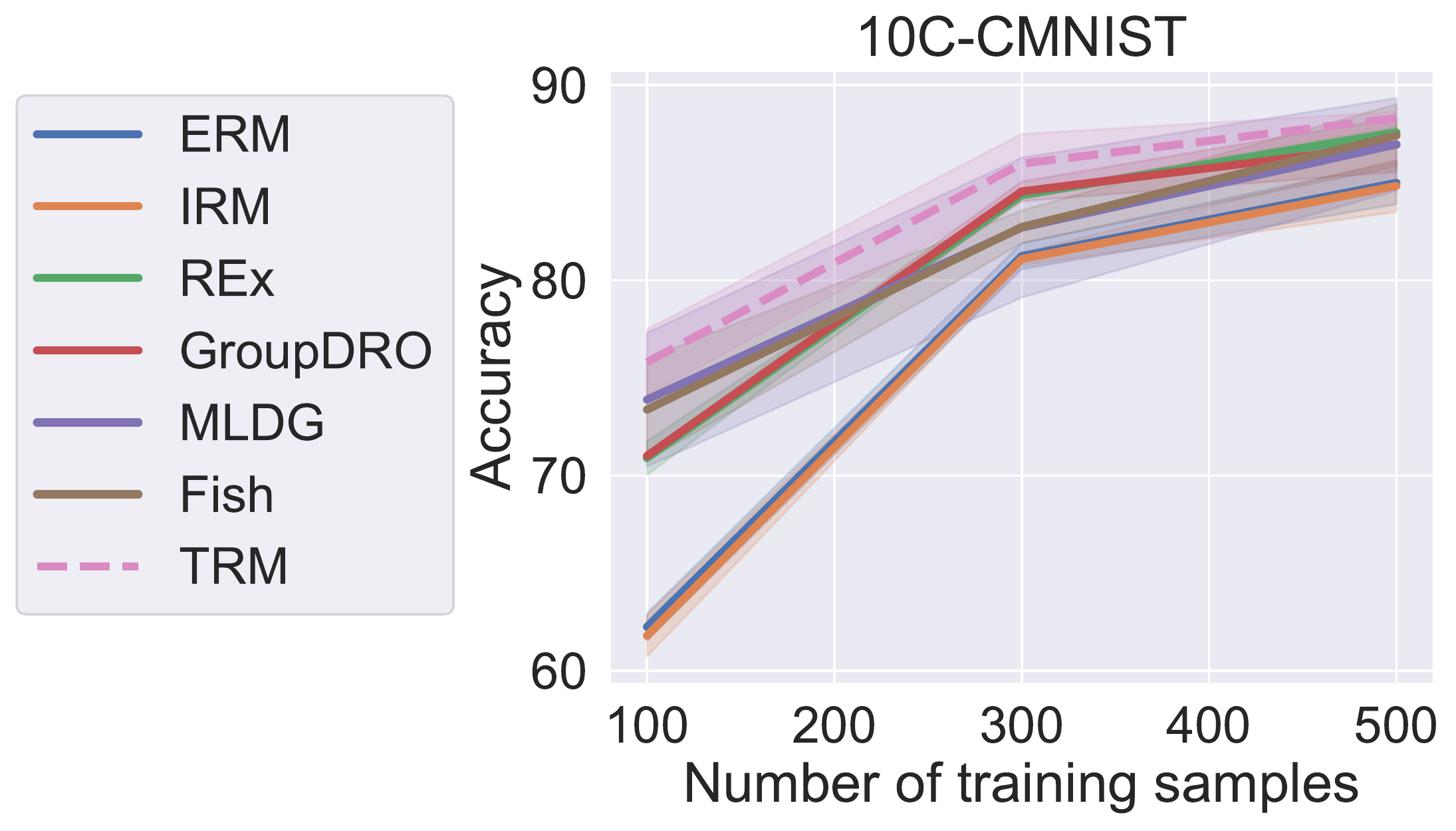}}%
%   \subfigure{\includegraphics[width=0.5\linewidth]{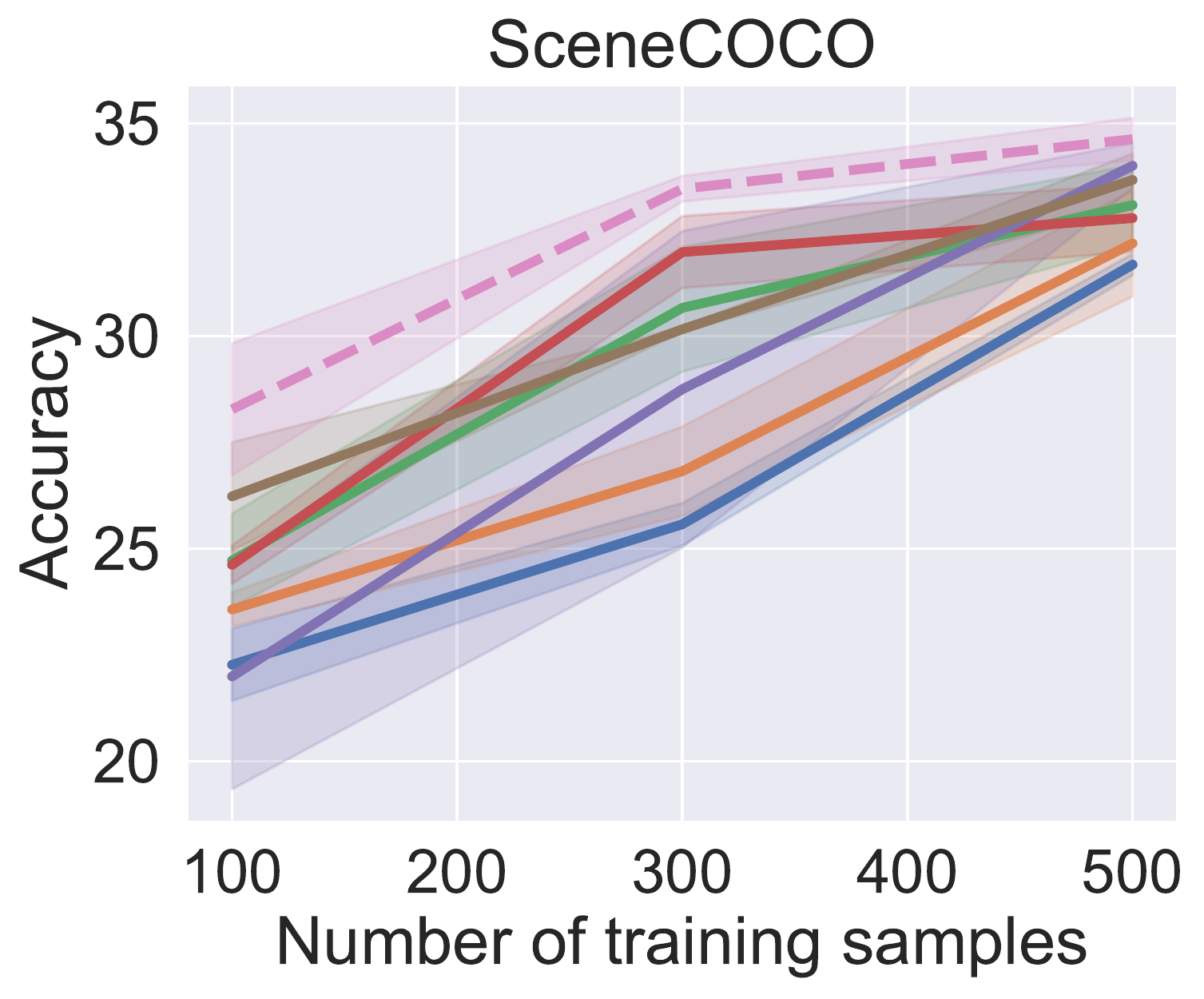}}
%   \caption{Test accuracy on Label-correlated shift, using different numbers of samples for finu-tuning on the target environment. \textbf{Left:} 10C-CMNIST. \textbf{Right:} SceneCOCO.}\label{img:adapt}
% \end{wrapfigure}
In \Figref{img:correlated}, we report the accuracy on the test environment under label-correlated shift and combined shift. The x-axis in \Figref{img:correlated} stands for the varying biased degree $r\%$ of the second training environment. We observe a consistent performance drop of all the methods as the training environments become more biased. Our main finding is that the TRM algorithm achieves a better test accuracy at most bias degrees on both datasets and competitive performance with Fish and MLDG on SceneCOCO when the bias degrees are large. The results show that the model trained by the TRM algorithm depends more on the invariant features to make predictions. 

We also observe that non-causal features combined with the distribution shifts of dummy features degrade the performance of all the methods~(Combined shift). In Appendix~\ref{app:dummy}, we show that all the methods have similar performance to the Oracle when only changing the dummy feature distribution. The experiments suggest that when non-causal features exist, distribution shifts on dummy features can further hurt the out-of-distribution generalization. Besides, we show in Appendix~\ref{app:transfer} that TRM-trained models transfer faster to target environments with limited data for fine-tuning.

\subsection{Experiments on PACS and Office-Home}
\subsubsection{Setups}

We further evaluate our methods on two real-world datasets, PACS and Office-Home.

\textbf{PACS}~\citep{Li2017DeeperBA} comprises four environmental data, namely arts, cartoons, photos, and sketches. This dataset contains 9991 datapoints of dimension (3,224,224) from 7 classes. 

\textbf{Office-Home}~\citep{Venkateswara2017DeepHN} includes four environments, namely art, clipart, product and real. It contains 15588 datapoints of dimension (3,224,224) from 65 classes. 

These two datasets are widely used in domain generalization literature. We follow the standard valuation protocol~\citep{Li2017DeeperBA}, which reports the test accuracy on each hold-out environment when training on the other three environments. We use the ImageNet-pretrained ResNet18 as the backbone of feature extractors. We use the SGD optimizer with a momentum of $0.9$, a weight decay of 1e-4, and a fixed learning rate of 1e-4. The batch size is set to $32$.
\subsubsection{Results}

In Table~\ref{table:pacs-res18} and~\ref{table:office-res18}, we report the test accuracy on PACS and Office-Home dataset. The proposed TRM algorithm achieves superior average accuracy on these datasets. We observe that TRM has better generalization ability on most hold-out environments in the two datasets, except the Photo environment, where all the methods have comparable performance. Further, we test TRM without the weighted gradient-matching term~(TRM w/ GM) by setting $\lambda=0$. TRM w/ GM still outperforms other baselines with only the direct transfer term. We also show in Appendix~\ref{app:pacs-exp} that TRM consistently improves over other methods with different architectures and validation set configurations.

\begin{table*}[htbp]
\small
\begin{center}
\caption{Test accuracy on PACS dataset}
\label{table:pacs-res18}
\begin{tabular}{c c c c c c c c}
		\toprule
		\textbf{Algorithm} & \textbf{Art} & \textbf{Cartoon} & \textbf{Photo} & \textbf{Sketch}	& \textbf{Average}\\
		\midrule
        \multirow{1}{*}{ERM} & $73.7\pm 1.0$ & $65.7\pm 2.3$ & {$94.8\pm 0.7$} & $62.5\pm 2.4$ & $74.1$ \\
        \multirow{1}{*}{IRM}&$73.3\pm 1.0$ &$65.5\pm 1.9$&$94.7\pm 0.6$& $62.9\pm 1.5$ &$74.1$\\
        \multirow{1}{*}{REx}&$74.0\pm 1.0$&$66.8\pm 2.5$&$94.6\pm 0.8$&$63.7\pm 2.8$&$74.8$\\
        \multirow{1}{*}{GroupDRO}& $74.1\pm 0.8$& $67.3\pm 2.0$& {94.7 $\pm$ 0.7}& $63.5\pm 3.4$&$74.9$\\
        \multirow{1}{*}{MLDG}& $76.0\pm 1.8$& $69.2\pm 0.9$& \textbf{95.0 $\pm$ 0.4}& $64.3\pm 3.4$&$76.4$\\
        \multirow{1}{*}{Fish}& $75.0\pm 1.2$& $69.0\pm 1.7$& {94.7 $\pm$ 0.5}& $64.3\pm 1.6$&$76.0$\\
        \midrule
        TRM w/ GM & $77.5\pm 2.3$ & $\bf{70.3\pm 0.9}$& $94.4\pm0.7$ & $65.5 \pm 1.7$ & 76.9 \\
        % TRM (old)  & ${78.7 \pm 1.2}$ & {69.5 $\pm$ 1.3}& $94.3\pm 0.4$ & ${65.5 \pm 1.6}$ & $77$ \\
        TRM & {$\bf{80.6 \pm 2.1}$} & {$68.7 \pm 1.4$}& $93.7\pm 1.5$ & $\bf{67.1 \pm 2.5}$ & $\bf{77.5}$ \\
		\bottomrule
\end{tabular}
\end{center}
\vspace{-5pt}
\end{table*}

\begin{table*}[htbp]
\small
\begin{center}
\caption{Test accuracy on Office-Home dataset}
\label{table:office-res18}
\begin{tabular}{c c c c c c c c}
		\toprule
		\textbf{Algorithm} & \textbf{Art} & \textbf{Clipart} & \textbf{Product} & \textbf{Real}	& \textbf{Average}\\
		\midrule
        \multirow{1}{*}{ERM} & $51.1\pm 0.4$ & $42.5\pm 0.7$ & $65.5\pm 0.1$ & $68.4\pm 0.7$ & $56.8$ \\
        \multirow{1}{*}{IRM}&$51.6\pm 0.2$ &$42.5\pm 0.6$&$65.2\pm 0.1$& $68.1\pm 0.7$ &$56.8$\\
        \multirow{1}{*}{REx}&$50.9\pm 1.3$&$42.3\pm 0.6$&$65.1\pm 0.4$&$68.1\pm 0.7$&$56.6$\\
        \multirow{1}{*}{GroupDRO}& $51.2\pm 0.6$& $42.1\pm 0.9$& $64.9\pm 0.4$& $67.9\pm 0.6$&$56.5$\\
        \multirow{1}{*}{MLDG}& $52.3\pm 0.3$& $44.1\pm 0.7$& $66.9\pm 0.8$& $69.4\pm 0.5$&$58.2$\\
        \multirow{1}{*}{Fish}& $51.5\pm 0.5$& $43.2\pm 0.9$& $66.5\pm 0.3$& $69.1\pm 0.1$&$57.6$\\
        \midrule
        TRM w/ GM & $\bf{54.0\pm 0.5
        }$& $46.4\pm 0.9$ & $67.9\pm 1.0$& $69.4\pm 1.0$& $59.4$\\
        % TRM (old)  & ${78.7 \pm 1.2}$ & {69.5 $\pm$ 1.3}& $94.3\pm 0.4$ & ${65.5 \pm 1.6}$ & $77$ 
        TRM & {${53.9 \pm 0.2}$} & {$\bf{46.4 \pm 0.9}$}& $\bf{68.9\pm 1.1}$ & $\bf{69.6 \pm 0.8}$ & $\bf{59.7}$ \\
		\bottomrule
\end{tabular}
\end{center}

\end{table*}

% \section{Related Works}
% \input{related}
\section{Conclusion}
The discrepancy between the training and test domain can degrade the performance of algorithms developed for the i.i.d setting. We propose a robust criterion termed Transfer Risk Minimization~(TRM) to tackle the out-of-distribution problem. The transfer risk promotes the transferability of the per-environment predictors. The feature representation updates accordingly to support such transfer. We demonstrate that TRM better recovers the weights associated with the invariant features by an illustrative example. Due to the optimality of the per-environment predictor, TRM objective naturally decomposes into two terms, the direct transfer term and the weighted-gradient matching term. One limitation of TRM is that the inverse Hessian vector product can have a large variance with a small batch size. The better optimization of the weighted gradient-matching term is left for future work.

Experimentally, we test our approach on several controlled experiments. We show that TRM achieves better out-of-distribution performance under different combinations of features. We also demonstrate the effectiveness of TRM on the PACS and Office-Home datasets.

\section*{Acknowledgements}

This research was supported by the HDTV Grand Alliance Fellowship.
\clearpage
% \bibliography{bib}
% \bibliographystyle{plain}
\bibliography{bib}
\bibliographystyle{iclr2022_conference}

\clearpage
\appendix
\section{Proofs}
\label{app:proofs}

\subsection{Proof of Theorem \ref{thm:irm2trm}}
\label{proof:irm2trm}
In the following theorem, we show a simplified version as in \citet{Rosenfeld2020TheRO}, where $\sigma_e = 1, \forall e \in \{1,2, \dots, E\}$. The full version can be similarly deduced.
\begin{theorem}[Formal statement]
Assume $\sqrt{d_e} \le \parallel \mu_c \parallel_2 \le 8\sqrt{d_e},\parallel \mu_i \parallel_2 \le 8\sqrt{d_e}, i \in \{1,\dots, E\} $. We further assume there exist two training environments $i,j\in \{1,2, \dots, E\}$ such that $\parallel \mu_i-\mu_k\parallel_2\ge 2\sqrt{2d_e}, \forall k \in \{1,2,\dots E\}-\{i\}$ and $ \mu_i^T\mu_j \le -\frac{1}{\sigma_c^2}\parallel \mu_c \parallel_2^2$. Then there exists a classifier which achieves near optimal IRMv1 loss~(\Eqref{eq:irmv1}) and has high transfer risk~(\Eqref{eq:obj-robust}) when $d_e>>1$. In addition, for any test environment $E+1$ with a non-causal mean far from the those in training:
\[\forall e \in \{1,\dots,E\}, \min_{y \in \{\pm 1\}}\parallel\mu_{E+1}-\mu_i\parallel\ge (\sqrt{2}+\delta)\sqrt{d_e}\]
for some $\delta>0$. Then the classifier behaves like the ERM-trained classifier on $1-\frac{2E}{\sqrt{\pi}\delta}\exp{(-\delta^2)}$ fraction of the test distribution.
\end{theorem}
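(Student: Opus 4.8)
My plan is to follow the latent-space construction of \citet{Rosenfeld2020TheRO} and then analyze the \emph{transfer} risk of the resulting feature map, which is where the argument departs from prior work. Since $f$ is injective, the feature extractor can recover the latents $(z_c,z_e)$, so I would work directly with the Gaussian mixture in latent space. In the spirit of Theorem 3.3 of \citet{Rosenfeld2020TheRO}, I would build $\Phi$ from two pieces: an invariant coordinate reproducing the Bayes-optimal predictor based on $z_c$, and a \emph{gated} non-causal coordinate of the form $\psi(z)=z_e\cdot\mathbbm{1}[z\in T]$, where $T$ is a ``tail'' region carrying only vanishing probability mass $p$ under every training environment. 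The shared IRMv1 predictor $w$ places zero weight on $\psi$ and coincides with the invariant predictor on the bulk $T^{c}$, so that the classifier equals the ERM solution precisely on $T$.

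First I would verify near-optimal IRMv1 loss for this $\Phi$ with the shared invariant $w$. The loss term is within a vanishing amount of the invariant solution because $T$ has negligible mass. For the gradient penalty, the partial derivative of $\E_{P}[\ell(w\circ\Phi)]$ with respect to the weight on $\psi$ equals $\E_P[\ell'(\cdot)\,z_e\,\mathbbm{1}[z\in T]]$, which is controlled by the tail mass and can be made negligible by taking $T$ sufficiently thin; the derivatives along the invariant coordinate vanish since the invariant predictor is per-environment optimal on the bulk. This reproduces the near-optimal IRMv1 claim and can be adapted directly from \citet{Rosenfeld2020TheRO}.

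The heart of the proof is showing that the \emph{same} $\Phi$ has high transfer risk~(\Eqref{eq:obj-robust}). The key observation is that a small \emph{gradient} at the shared $w$ does not imply a nearby per-environment \emph{optimum}: re-optimizing the predictor on environment $Q=i$ drives a large weight onto $\psi$, because on $T$ the feature $z_e$ is strongly aligned with $y\mu_i$ and is near-perfectly predictive there. Thus $w(i;\Phi)$ commits to the direction $\mu_i$, and by the separation assumption $\|\mu_i-\mu_k\|_2\ge 2\sqrt{2d_e}$ this commitment is environment-specific. I would then lower-bound the transfer risk for the outer index $Q=i$ by evaluating the inner supremum at the vertex $P=P_j$ of $\mathrm{Conv}(\Omega\setminus i)$: since $\mu_i^{T}\mu_j\le -\tfrac{1}{\sigma_c^2}\|\mu_c\|_2^2$, the non-causal direction that $w(i)$ trusts is \emph{anti}-aligned with the label on environment $j$, even overwhelming the causal signal. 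Consequently $w(i)$ makes confident wrong predictions on $j$, and because the cross-entropy loss is unbounded this yields a non-vanishing (indeed arbitrarily large, as the per-environment weight grows) excess transfer loss, so $\gR(\Phi;\Omega)$ is large while the invariant solution keeps it small.

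Finally, for the out-of-distribution claim I would show the classifier equals the ERM solution off the bulk. It already equals ERM on $T$, so it suffices to bound the test mass landing in the bulk $T^{c}$. A test point has $z_e=y\mu_{E+1}+u$ with $u\sim\gN(0,I_{d_e})$; membership in training environment $i$'s bulk reduces (after the one-dimensional reduction used by \citet{Rosenfeld2020TheRO}) to a standard Gaussian event, where the assumed gap $\|\mu_{E+1}-\mu_i\|_2\ge(\sqrt2+\delta)\sqrt{d_e}$ yields the tail bound $P(Z\ge\sqrt2\,\delta)\le \tfrac{1}{2\sqrt\pi\,\delta}e^{-\delta^2}$; a union bound over the $E$ environments and the two label signs gives the stated $1-\tfrac{2E}{\sqrt\pi\delta}e^{-\delta^2}$ fraction. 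I expect the transfer-risk step to be the main obstacle: one must choose $T$ and its gating so that the loss and gradient contributions of $\psi$ stay negligible (preserving near-optimal IRMv1), while the per-environment optimum still places enough weight on $\psi$ that transfer across the opposing-mean pair $(i,j)$ is genuinely harmful. Making ``high transfer risk'' quantitative through the unboundedness of cross-entropy, rather than through a term that vanishes with the tail mass, is the delicate part of the argument.
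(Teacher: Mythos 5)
Your overall strategy---a piecewise feature map in latent space that passes the non-causal coordinate through only on a restricted region, near-optimal IRMv1 loss via tail bounds, a transfer-risk lower bound obtained by evaluating the inner supremum at the vertex $P_j$ and exploiting $\mu_i^T\mu_j\le -\parallel\mu_c\parallel_2^2/\sigma_c^2$, and the test-environment claim via Gaussian tail bounds plus a union bound---matches the paper's. But the specific gating you choose breaks the central claim. You take $T$ to have vanishing mass under \emph{every} training environment, whereas the paper zeroes out $z_e$ only on the balls $B_r(\pm\mu_k)$ for $k\ne i$, so that the \emph{bulk} of environment $i$ (the ball $B_r(\mu_i)$, which the separation assumption $\parallel\mu_i-\mu_k\parallel_2\ge 2\sqrt{2d_e}$ keeps disjoint from the zeroed region) retains the non-causal coordinate. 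This asymmetry is what makes the argument work: the shared predictor $w=(2\mu_c/\sigma_c^2,\,2\mu_i)$ is \emph{exactly} per-environment optimal on $i$ (zero gradient penalty there, tail-mass penalty elsewhere), and when the environment-$j$ optimal predictor $w_j\approx(2\mu_c/\sigma_c^2,\,2\mu_j)$ is transferred to environment $i$, the anti-alignment drives the score to at most $0$ on essentially \emph{all} of $P_i$, giving transfer risk at least $\ell(0)=\ln 2-o(1)$, while the invariant solution's transfer risk tends to $0$ because $\parallel\mu_c\parallel_2\ge\sqrt{d_e}$.

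With your gating, the per-environment optima disagree only on a set of mass $e^{-\Omega(d_e)}$ under every training environment, and the rescue you propose---that the unboundedness of cross-entropy makes the excess transfer loss non-vanishing---does not go through: the weight $w_\psi(Q)$ on the gated coordinate is the argmin of a non-separable conditional logistic loss (both class-conditional Gaussians have full support on $T$), hence finite with norm polynomial in $d_e$, and the conditional loss on $T$ then grows only polynomially in $d_e$ (this is exactly the bound the paper computes leading to \Eqref{tr-upper}); multiplied by the tail mass, the excess transfer risk is $\mathrm{poly}(d_e)\,e^{-\Omega(d_e)}\to 0$, so your classifier has \emph{low} transfer risk. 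There is also an internal inconsistency in the construction: you state that the shared predictor places zero weight on $\psi$, which would make the classifier the invariant one everywhere and void the claim that it behaves like ERM on the far-away test environment; the predictor must carry the environment-$i$ ERM weight on the non-causal coordinate, and the gating must be arranged so that this coordinate is alive on the bulk of environment $i$ and of the test environment, but only on the tails of the remaining training environments.
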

\begin{proof}
We follow the proof idea of theorem 6.1 in \citet{Rosenfeld2020TheRO} and first define $r=\sqrt{2d_e}$. We construct $\gB_r$ as
\[\gB_r = \left[\bigcup_{e\in \{1,\dots,E\}-\{ i\}}B_r(\mu_e)\right] \cup \left[\bigcup_{e\in \{1,\dots,E\}-\{ i\}}B_r(-\mu_e)\right]\]
where $B_r(\mu_e)$is the $\ell_2$ ball centered at $\mu_e$. We construct the classifier as follows:
\[\Phi = \left\{
\begin{aligned}
\begin{bmatrix}
z_c \\ 0
\end{bmatrix}, z_e \in \gB_r \\
\begin{bmatrix}
z_c \\ z_e
\end{bmatrix}, z_e \in \gB_r^c
\end{aligned}
\right. \quad \text{and} \quad w = \begin{pmatrix}
\frac{2\mu_c}{\sigma_c^2} \\ {2\mu_i}
\end{pmatrix}\]
$\Phi$ outputs the invariant feature $\begin{bmatrix}
z_c \\ 0
\end{bmatrix}$ in the $\ell_2$ balls centered at non-causal means except $\mu_i$. Note that $w$ is the optimal predictor on environment $i$ when using the feature extractor above, hence it automatically zero gradient penalty on environment $i$. By setting $\epsilon=2,\sigma_e=1$ in theorem D.3~\citep{Rosenfeld2020TheRO}, the IRMv1 penalty term environment other than $i$ is upper bounded by $$\sum_{P\in \Omega} \parallel \nabla_w \E_{P}[\ell(w \circ \Phi)]\parallel^2\le O\left(\exp{(-\frac{d_e}{4})(2d_e\exp{(4)}+\bar{\mu}})\right)$$
where $\bar{\mu} = \frac{1}{E}\sum_{k=1}^E \parallel \mu_k \parallel_2^2$. Thus when $d_e >> 1$, the penalty term shrinks rapidly towards 0. 

For $z_e\in \gB_r$, the classifier is the invariant classifier that only uses $z_c$ for prediction, and thus has small ERM loss: $\sum_{P\in \Omega}\E_{P}[\ell(w \circ \Phi)\mathbb{I}(z_e\in \gB_r)]=\Pr(z_e\in \gB_r)E\E_{\gN(\mu_c, \sigma_cI_{d_e})}[\ell (2\frac{\mu_c^Tz_c}{\sigma_c^2})]=\Pr(z_e\in \gB_r)E\min\limits_{w^*} \sum_{P\in \Omega}\E_{P}[\ell(w^* \circ \Phi)]$. For $z_e\in \gB_r^c$, the incurred loss can be upper bounded by 
\begin{align*}
&\sum_{P\in \Omega}\E_{P}[\ell(w \circ \Phi)\mathbb{I}(z_e\in \gB_r^c)]\\
&\le \Pr(z_e\in \gB_r^c)\max_{k\not=i}E\E_{P_k}[\ell (2\frac{\mu_c^Tz_c}{\sigma_c^2} + 2\mu_i^Tz_e)|z_e\in \gB_r^c]\\
&\le \Pr(z_e\in B_r^c(\mu_j))\max_{k\not=i}E\E_{P_k}[\ell (2\frac{\mu_c^Tz_c}{\sigma_c^2} + 2\mu_i^Tz_e)|z_e\in \gB_r^c]\qquad \text{($\gB_r^c \subseteq B_r^c(\mu_j)$)}\\
    &\le \exp{(-\frac{d_e}{8})} \max_{k\not=i}E\E_{P_i}[\ell (2\frac{\mu_c^Tz_c}{\sigma_c^2} + 2\mu_i^Tz_e)|z_e\in \gB_r^c] \qquad \text{(sub-exponential tail bound)}\\
    &= \exp{(-\frac{d_e}{8})}(1+\ln 2+ \max_{k\not=i}E\E_{P_k}[\max(0, -(2\frac{\mu_c^Tz_c}{\sigma_c^2} + 2\mu_i^Tz_e))|z_e\in \gB_r^c])\qquad \text{($\ell(x) \le 1+\ln 2 +\max(0,-x)$)}\\
    &\le \exp{(-\frac{d_e}{8})}(1+\ln 2+ \max_{k\not=i}E\E_{P_k}\left[\left|2\frac{\mu_c^Tz_c}{\sigma_c^2} + 2\mu_i^Tz_e\right||z_e\in \gB_r^c\right])\\
    &\le \exp{(-\frac{d_e}{8})}(1+\ln 2+ 2\max_{k\not=i}E\E_{P_k}\left[\left|\frac{\mu_c^Tz_c}{\sigma_c^2}\right|\right]+\E_{P_k}\left[\left|\mu_i^Tz_e\right||z_e\in \gB_r^c\right])\\
    & = \exp{(-\frac{d_e}{8})}(1+\ln 2+ 2\max_{k\not=i}E\frac{2\parallel \mu_c\parallel^2}{\sqrt{2\pi}}\exp(-\frac{\parallel \mu_c\parallel^2}{2\sigma_c^4})\\
    &\qquad +\frac{\parallel \mu_c\parallel^2}{\sigma_c^2}(1-2\Phi(-\frac{\parallel \mu_c\parallel}{\sigma_c^2}))+\E_{P_k}\left[\left|\mu_i^Tz_e\right||z_e\in \gB_r^c\right]) \numberthis \label{eq:abs-gaussian}\\
    & = \exp{(-\frac{d_e}{8})}(1+\ln 2+ 2E\frac{2d_e}{\sqrt{2\pi}}\exp(-\frac{d_e}{2\sigma_c^4})+\frac{d_e}{\sigma_c^2}+\max_{k\not=i}E\E_{P_k}\left[\left|\mu_i^Tz_e\right||z_e\in \gB_r^c\right])\\
    & = \exp{(-\frac{d_e}{8})}(1+\ln 2+ 2E\frac{2d_e}{\sqrt{2\pi}}\exp(-\frac{d_e}{2\sigma_c^4})+\frac{d_e}{\sigma_c^2}+\max_{k\not=i}E\E_{P_k}\left[\left|\mu_i^Tz_e\right||z_e\in B_r^c(\mu_k)\right])\qquad \text{($\gB_r^c \subseteq B_r^c(\mu_k)$)}\\
    & = \exp{(-\frac{d_e}{8})}(1+\ln 2+ 2E\frac{2d_e}{\sqrt{2\pi}}\exp(-\frac{d_e}{2\sigma_c^4})+\frac{d_e}{\sigma_c^2}+\max_{k\not=i}E\E_{\gN(\mu_k,I_{d_e})}\left[\left|\mu_i^Tz_e\right||\parallel z_e-\mu_k \parallel_2\ge r\right])\\
   & = \exp{(-\frac{d_e}{8})}(1+\ln 2+ 2E\frac{2d_e}{\sqrt{2\pi}}\exp(-\frac{d_e}{2\sigma_c^4})+\frac{d_e}{\sigma_c^2}+\max_{k\not=i}E\E_{\gN(0,I_{d_e})}\left[\left|\mu_i^T(z+\mu_k)\right||\parallel z \parallel_2\ge r\right])\\
   &\le \exp{(-\frac{d_e}{8})}(1+\ln 2+ 2E\frac{2d_e}{\sqrt{2\pi}}\exp(-\frac{d_e}{2\sigma_c^4})+\frac{d_e}{\sigma_c^2}+64d_e\\
   &\qquad+E\E_{\gN(0,\parallel\mu_i\parallel_2^2)}\left[\left|z\right||| z |\ge \parallel\mu_i\parallel_2r\right]) \qquad \text{($|\mu_i\mu_k|\le 64d_e$)} \\
  &\le \exp{(-\frac{d_e}{8})}(1+\ln 2+ 2E\frac{2d_e}{\sqrt{2\pi}}\exp(-\frac{d_e}{2\sigma_c^4})+\frac{d_e}{\sigma_c^2}+64d_e\\
  &\qquad +E\E_{\gN(0,\parallel\mu_i\parallel_2^2)}\left[z|z\ge \parallel\mu_i\parallel_2r\right]) \qquad \text{(Symmetry of Gaussian)}\\
&=\exp{(-\frac{d_e}{8})}(1+\ln 2+ 2E\frac{2d_e}{\sqrt{2\pi}}\exp(-\frac{d_e}{2\sigma_c^4})+\frac{d_e}{\sigma_c^2}+64d_e \\
&\qquad+E\parallel\mu_i\parallel_2\frac{\phi(\parallel\mu_i\parallel_2r)}{\Phi(-\parallel\mu_i\parallel_2r)})\qquad \text{(Conditional tail expectation of Gaussian)}\\
&\le \exp{(-\frac{d_e}{8})}(1+\ln 2+ 2E\frac{2d_e}{\sqrt{2\pi}}\exp(-\frac{d_e}{2\sigma_c^4})+\frac{d_e}{\sigma_c^2}+64d_e+E8\sqrt{d_e}(\frac{\parallel\mu_i\parallel_2r}{\sqrt{2}}+\sqrt{(\frac{\parallel\mu_i\parallel_2r}{2})^2+2}))\numberthis \label{eq:erfc}\\
&\le \exp{(-\frac{d_e}{8})}(1+\ln 2+ 2E\frac{2d_e}{\sqrt{2\pi}}\exp(-\frac{d_e}{2\sigma_c^4})+\frac{d_e}{\sigma_c^2}+64d_e+E8\sqrt{d_e}(\sqrt{d_e}+\sqrt{(d_e+2}))\numberthis \label{tr-upper}
\end{align*}
where $\phi,\Phi$ are the PDF and CDF of standard Gaussian. When $d_e>>1$, the upper bound \Eqref{tr-upper} is approximately zero. \Eqref{eq:abs-gaussian} holds by $\E_{\gN(\mu,\sigma)}[|x|]=2\sigma\phi(\frac{-\mu}{\sigma})+\mu(1-2\Phi(\frac{-\mu}{\sigma}))$. \Eqref{eq:erfc} holds by the following lower bound of Gaussian CDF $\Phi$: $\Phi(-x)\ge \frac{1}{2\sqrt{2}}\textrm{erfc}(\frac{x}{\sqrt{2}})=\frac{\exp(-x^2/2)}{\sqrt{2\pi}(\frac{x}{\sqrt{2}}+\sqrt{(\frac{x}{\sqrt{2}})^2+2})}$ \citep{erfc}. Together, the classifier has smalle ERM loss and IRMv1 penalty, hence it achieves nearly optimal IRMv1 loss when $d_e$ is large.

On the other hand, consider the transfer risk of the constructed classifier. The environmental optimal classifier on the environment $j$ is  $w_j = 
(\frac{2\mu_c}{\sigma_c^2} , {2\mu_j})^T
$ by the construction of $\Phi$. Since $\parallel \mu_i-\mu_k\parallel_2\ge 2\sqrt{2d_e}=2r, \forall k \in \{1,2,\dots E\}-\{i\}$, we know that when $z_e \in B_r(\mu_i),  $,$\Phi = \left.
\begin{aligned}
\begin{bmatrix}
z_c \\ z_e
\end{bmatrix}\end{aligned}\right.$, The incurred transfer risk is lower bounded by applying the predictor $w_j$ on environment $i$:
\begin{align*}
    &\max_{k \in \{1,\dots,E\}-\{i\}}\E_{P_i}\left[\ell(w_k\circ \Phi(x))\right] \\
    &=\max_{k \in \{1,\dots,E\}-\{i\}}\E_{P_i}\left[\ell(w_k\circ \Phi(x))\mathbb{I}(z_e \in B_r(\mu_i))]+\E_{P_i}[\ell(w_k\circ \Phi(x))\mathbb{I}(z_e \in B_r^c(\mu_i))\right]\\
    &=\max_{k \in \{1,\dots,E\}-\{i\}}\E_{P_i}\left[\ell (2\frac{\mu_c^Tz_c}{\sigma_c^2} + 2\mu_kz_e)\right] - \E_{P_i}\left[\left(\ell (2\frac{\mu_c^Tz_c}{\sigma_c^2} + 2\mu_kz_e)-\ell(w_k\circ \Phi(x))\right)\mathbb{I}(z_e \in \gB_r^c(\mu_i))\right]\\
    &\ge\max_{k \in \{1,\dots,E\}-\{i\}} \E_{P_i}\left[\ell (2\frac{\mu_c^Tz_c}{\sigma_c^2} + 2\mu_kz_e)\right] - \Pr(z_e \in B_r^c(\mu_i)) \E_{P_i}\left[\left(\ell (2\frac{\mu_c^Tz_c}{\sigma_c^2} + 2\mu_kz_e)\right)|z_e \in \gB_r^c(\mu_i)\right]\\
     &\ge \E_{P_i}\left[\ell (2\frac{\mu_c^Tz_c}{\sigma_c^2} + 2\mu_jz_e)\right]-\Pr(z_e \in B_r^c(\mu_i)) \E_{P_i}\left[\left(\ell (2\frac{\mu_c^Tz_c}{\sigma_c^2} + 2\mu_jz_e)\right)|z_e \in \gB_r^c(\mu_i)\right] \\
    &\ge   \ell (2\frac{\mu_c^T\mu_c}{\sigma_c^2} + 2\mu_j\mu_i)- \Pr(z_e \in B_r^c(\mu_i)) \E_{P_i}\left[\left(\ell (2\frac{\mu_c^Tz_c}{\sigma_c^2} + 2\mu_kz_e)\right)|z_e \in \gB_r^c(\mu_i)\right]\qquad \text{(Jensen's inequality)}\\
    &\ge  \ell (0)-\Pr(z_e \in B_r^c(\mu_i)) \E_{P_i}\left[\left(\ell (2\frac{\mu_c^Tz_c}{\sigma_c^2} + 2\mu_kz_e)\right)|z_e \in \gB_r^c(\mu_i)\right] \qquad \text{(Decreasing of $\ell$ and $\exists j, \mu_i^T\mu_j \le -\frac{\mu_c^T\mu_c}{\sigma_c^2}$)}\\
    &\ge \ln 2 -\exp{(-\frac{d_e}{8})}(1+\ln 2+ 2\frac{2d_e}{\sqrt{2\pi}}\exp(-\frac{d_e}{2\sigma_c^4})+\frac{d_e}{\sigma_c^2}+64d_e+8\sqrt{d_e}(\sqrt{d_e}+\sqrt{(d_e+2}))\quad \text{(Same analysis as in \Eqref{tr-upper})}
\end{align*}
Hence when $d_e$ is large, the transfer risk of the constructed classifier is closed to $\ln 2$. In addition, by theorem D.3~\citep{Rosenfeld2020TheRO}, for some $\delta>0$, the classifier behaves like the optimal ERM classifier in environment $i$ on $1-\frac{2E}{\sqrt{\pi}\delta}\exp{(-\delta^2)}$ of the test distribution.
\end{proof}

\subsection{Analysis in the 2-d case}

\label{app:toy}

We denote the mean of non-causal means as $\E_i \mu_i = {\sum_{i=1}^E \mu_i}/{E}$.
We consider the setting where the feature extractor is linear, \ie, $\Phi(x)=az_c+bz_e$, and the loss function is logistic loss $f(x)=\log{(1+e^{-x})}$. 

For simplicity, we use the ``sum-sum" version of TRM:
\begin{align*}
    \gR_{TRM}(\Phi) =& \E_{i\not=j}\E_{(x,y)\sim P_j}\left[f(y\Phi(x)w(P_i))\right]
    % \gR_{ERM}(\Phi)=&\E_i\E_{x,y\sim P_i}\left[f(y\Phi(x)w)\right],\\
\end{align*}
The minimal values of the objectives are scale-invariant to $a^2+b^2$. W.l.o.g we assume $a^2+b^2=1$ in the following. We show that TRM would learn a robust linear feature extractor when $\E_{i}[\mu_i]=0$, as shown below.
\begin{restatable}{proposition}{prefect}
\label{prop:stationary}
Assume $a^2+b^2=1$ and $\E_{i}[\mu_i]=0$, then the minimizer of the $\gR_{TRM}$ $(\pm 1,0)$. 
\end{restatable}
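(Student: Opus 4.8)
The plan is to reduce the inner optimization to closed form and then analyze the resulting one–parameter family. Conditionally on the label $y$, the feature $\Phi(x)=az_c+bz_e$ is Gaussian with mean $y\,m_i$ and variance $s^2$, where $m_i=a\mu_c+b\mu_i$ and $s^2=a^2\sigma_c^2+b^2\sigma_e^2$; thus the two class–conditional feature densities are equal–variance Gaussians with symmetric means. Hence the logistic model is \emph{well specified}: the true posterior equals $\sigma(2m_i\Phi/s^2)$ for the logistic link $\sigma$, and since the population cross-entropy is strictly convex in $w$ (its second derivative $\E[\Phi^2 f'']>0$) and a strictly proper scoring rule, the per-environment optimal predictor is exactly $w(P_i)=2m_i/s^2$. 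I would state and prove this as the first lemma, as it is what makes the rest explicit.

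Substituting this form, and noting that for $(x,y)\sim P_j$ the quantity $y\Phi(x)$ is distributed as $\gN(m_j,s^2)$ independently of $y$, each transfer term becomes $\E_{Z\sim\gN(m_j,s^2)}[f(Zw(P_i))]$. The change of variables $Z=m_j+s\xi$ shows this depends on $(a,b)$ only through the standardized signals $t_i:=m_i/s$, namely $\gR_{TRM}\propto\sum_{i\ne j}G(t_i,t_j)$ with $G(t_i,t_j)=\E_{\xi\sim\gN(0,1)}[f(2t_it_j+2t_i\xi)]$. At $b=0$ (forcing $a=\pm1$) every $t_i$ collapses to the common value $\tau_0=\mu_c/\sigma_c$, so all reference predictors coincide with the target predictors and each term attains the Bayes loss $G(\tau_0,\tau_0)$; this is the candidate minimizer. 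The well-specified structure further yields the decomposition $G(t_i,t_j)=H(t_j)+D(t_j,t_i)$, where $H(t)=G(t,t)$ is the Bayes loss and $D(t_j,t_i)=\E_{\Phi\sim P_j}\KL(p_j\|p_i)\ge0$ is an expected transfer penalty vanishing iff $t_i=t_j$, giving $\gR_{TRM}\propto(E-1)\sum_jH(t_j)+\sum_{i\ne j}D(t_j,t_i)$.

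To establish optimality I would work on the circle $a=\cos\theta,\;b=\sin\theta$ and first show $\theta=0$ is stationary. A direct computation gives $\tfrac{d t_k}{d\theta}\big|_{\theta=0}=\mu_k/\sigma_c$, while at $t=\tau_0\mathbf{1}$ the symmetry of the configuration forces $\partial\gR_{TRM}/\partial t_k$ to equal a constant $c_0$ independent of $k$; therefore $F'(0)=c_0\sum_k\mu_k/\sigma_c=0$ \emph{precisely because} $\E_i\mu_i=0$. This is exactly the role of the zero-mean assumption, and it pins down $(\pm1,0)$ as the only symmetric stationary point.

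The main obstacle is upgrading stationarity to global minimality. The difficulty is genuine: moving $b$ away from $0$ can raise some $|t_j|$ and thereby \emph{lower} the Bayes term $\sum_jH(t_j)$, so the bound $D\ge0$ alone does not suffice — one must show the aggregated transfer penalty $\sum_{i\ne j}D(t_j,t_i)$ dominates any base-loss gain. I expect this to follow from a second-order analysis along $\theta$ at $\theta=0$, combined with the constraint $a^2+b^2=1$ and $\E_i\mu_i=0$ (which together bound the attainable increase of $\sum_j t_j^2=\frac{Ea^2\mu_c^2+b^2\sum_j\mu_j^2}{a^2\sigma_c^2+b^2\sigma_e^2}$), together with convexity of $G$ in its arguments to control the curvature of each penalty term. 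Verifying that this trade-off has the right sign for every $\theta$ is the crux of the argument.
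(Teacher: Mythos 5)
Your setup agrees with the paper up to the closed-form per-environment predictor $w(P_i)=2(a\mu_c+b\mu_i)/(a^2+b^2)$, and your stationarity computation at $\theta=0$ (where $\E_i\mu_i=0$ kills the first derivative) is correct as far as it goes. But the proposal does not prove the proposition: global minimality is exactly the claim, and you leave it at ``I expect this to follow from a second-order analysis along $\theta$ at $\theta=0$.'' A second-order expansion at a stationary point can only certify \emph{local} minimality, and the trade-off you correctly identify --- that increasing $|b|$ can inflate $\sum_j t_j^2$ and hence shrink the Bayes term $\sum_j H(t_j)$ whenever $\sum_j\mu_j^2/E>\mu_c^2$ --- is precisely the part that is not resolved by your decomposition $G(t_i,t_j)=H(t_j)+D(t_j,t_i)$ with $D\ge 0$. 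So the crux you name is a genuine gap, not a routine verification.

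The idea you are missing is that no local analysis is needed. The paper obtains a \emph{global} lower bound in one step: since the logistic loss $f$ is convex, apply Jensen's inequality to the entire inner argument $y\Phi(x)w(P_i)$ jointly over the choice of environments $i\neq j$ and over $z_e\sim\gN(\mu_j,1)$, conditioning only on $z_c$. The resulting expectation of the logit is $2\bigl[a^2 z_c\mu_c + ab\,z_c\,\E_i\mu_i + ab\,\mu_c\,\E_j\mu_j + b^2\,\E_{i\neq j}[\mu_i\mu_j]\bigr]$, and the assumption $\E_i[\mu_i]=0$ annihilates every term except $2a^2 z_c\mu_c$. This gives $\gR_{TRM}(\Phi)\ge 2\,\E_{z_c\sim\gN(\mu_c,1)}f(2a^2 z_c\mu_c)$ for every $(a,b)$ on the unit circle, which (since the population logistic risk is convex in the scalar coefficient and minimized at the well-specified value $2\mu_c$, attained when $a^2=1$) is bounded below by $\gR_{TRM}((\pm1,0))$. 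In other words, the zero-mean assumption is used not merely to make the gradient vanish at $(\pm1,0)$, as in your argument, but to collapse the Jensen lower bound onto the invariant-only risk globally. If you want to salvage your route, you would need a genuinely global comparison showing $\sum_{i\neq j}D(t_j,t_i)$ dominates the possible decrease of $(E-1)\sum_j H(t_j)$ for all $\theta$, which is substantially harder than the one-line convexity argument.
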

\begin{proof}
Given feature extractor with parameter $a,b$, the optimal predictor $w(P_i)$ has the closed form $w(P_i) = \frac{2(a\mu_c+b\mu_i)}{a^2+b^2}=2(a\mu_c+b\mu_i)$ in $\gR_{TRM}$.
Since the logistic loss $f(x) = \log{(1+e^{-x})}$ is convex, by Jensen's inequality we have $\forall \Phi$:
\begin{align*}
    &\gR_{TRM}(\Phi) =\E_{i\not=j}\E_{(x,y)\sim P_j}\left[f(y\Phi(x)w(P_i))\right]\\
    &=2\E_{\mu_i,\mu_j,i\not=j}\E_{z_c\sim \gN(\mu_c,1),z_e\sim \gN(\mu_j,1)} f(\frac{2(az_c+bz_e)(a\mu_c+b\mu_i)}{(a^2 + b^2)})\qquad\textrm{(Symmetry of $y=\pm 1$)}\\
    &\ge 2\E_{z_c\sim \gN(\mu_c,1)}f(2\E_{\mu_i,\mu_j}\E_{z_e\sim \gN(\mu_j,1)}[2(az_c+bz_e)(a\mu_c+b\mu_i)])\qquad\textrm{(Jensen's inequality)} \\
    &= 2\E_{z_c\sim \gN(\mu_c,1)}f(2[a^2z_c\mu_c + b\E_j\mu_j(az_c+b \E_i\mu_i) +ab\mu_c\E_i\mu_i])\\
    &= 2\E_{z_c\sim \gN(\mu_c,1)}f(2a^2z_c\mu_c)\qquad\textrm{($\E_i[\mu_i]=0$)}\\
    &= \gR_{TRM}((\pm 1,0))
\end{align*}
\end{proof}

\subsection{Proof of Proposition \ref{prop:implicit}}
\implicit*
\begin{proof}
The function $w(Q;\Phi)$ is implicitly defined by the optimality condition:
\begin{align*}
    \frac{\partial  \E_{Q}[\ell(w(Q) \circ \Phi)]}{\partial w(Q)} = 0
\end{align*}
By the implicit function theorem we know that:
\begin{align*}
    \frac{d w(Q)}{d \Phi} = -H_{w(Q)}^{-1}\frac{\partial^2  \E_{Q}[\ell(w(Q) \circ \Phi)]}{\partial w(Q)\partial \Phi}
\end{align*}
Thus we have
\begin{align*}
    (\frac{\partial \E_{P}[\ell(w(Q) \circ \Phi)]}{\partial w})^T \frac{d w(Q)}{d \Phi}&=-(\frac{\partial \E_{P}[\ell(w(Q) \circ \Phi)]}{\partial w})^TH_{w(Q)}^{-1}\frac{\partial^2  \E_{Q}[\ell(w(Q) \circ \Phi)]}{\partial w(Q)\partial \Phi} \\
    &= \frac{\partial (v_Q^T\frac{\partial \E_{Q}[\ell(w(Q) \circ \Phi )]}{\partial w})}{\partial \Phi}
\end{align*}
where $v_Q=-(\frac{\partial \E_{P}[\ell(w(Q) \circ \Phi )]}{\partial w})^T H_{w(Q)}^{-1}$.
\end{proof}

\section{Experimental Details}
\label{app:exp}
% \subsection{Experiments in Section~\ref{exp:linear-case}}

% We synthesize the datasets according to the data generation process in section~\ref{exp:linear-case}. We fix the sample mean and sample variance of the non-causal means to 1, \ie $\E_i \mu_i =1, \textrm{Var}(\mu_i)=1$. For each mini-batch, we uniformly sample two different environments for ERM, IRMv1, and TRM. Adam with default parameters and a learning rate of $1e-2$ is used as the optimizer. We sample 10000 datapoint for each training environment and repeat all the experiments ten times.
% \label{app:toy}
\subsection{Synthetic Datasets}
\label{app:dataset}
For 10C-CMNIST and SceneCOCO, we construct 2 training environments, 1 test environments and 1 validation. Below we discuss the detailed data generation process. All the data are constructed on the publicly available dataset and do not contain personally identifiable information or offensive content. The dataset is splitted evenly across environments. For 10C-CMNIST/SceneCOCO, each environment has 16000/2600 datapoints and validation set has 12000/2200 datapoints. We repeat all the experiments ten times on one GeForce RTX 2020 GPU. 

\subsubsection{10C-CMNIST}

10C-CMNIST dataset consists of 60000 datapoints of dimension (3,28,28) from 10 classes. We use 16000 datapoints for each train/test environment. The remaining 12000 datapoints are used for validation.

We use the digits in the public MNIST dataset~\citep{LeCun2005TheMD} as the invariant~(causal) feature. We use the following ten RGB colors as digit colors~(non-causal feature): (0, 100, 0), (188, 143, 143), (255, 0, 0), (255, 215, 0), (0, 255, 0), (65, 105, 225), (0, 225, 225), (0, 0, 255), (255, 20, 147), (180, 180, 180). For background color~(dummy feature), we randomly pick five RGB colors for each each environment in different runs.
 
For combined shift and label-correlated shift, the two training environments are constructed with $100\%$/$r$ label-digit color correlation, where $r$ is the biased degree. For the two training environments, every digit label is randomly correlated with a digit color with $100\%$/$r$ correlation in different runs. We uniformly sample a digit color for every image for the test environment, indicating no correlation between label and git color.

For each image, the background color is uniformly drawn from five RGB colors. Note that the five background colors for each environment are picked separately in combined shift and label-uncorrelated shift. Hence the background colors across environments can be non-overlapped.

\subsubsection{SceneCOCO}

SceneCOCO dataset consists of 10000 datapoints of dimension (3,64,64) from 10 classes. We use 2600 datapoints for each train/test environment. The remaining 2200 datapoints are used for validation.

We use the following ten objects in the public COCO dataset~\citep{Lin2014MicrosoftCC} as the invariant feature: boat, airplane, truck, dog, zebra, horse, bird, train, bus, motorcycle. For the background scenes~(non-causal feature), we use the following 19 scenes in the public Places dataset~\citep{Zhou2018PlacesA1}: beach, canyon, building facade, staircase, desert~(sand), crevasse, bamboo forest, broadleaf forest, ball pit, kasbah, lighthouse, pagoda, rock arch, oast house, orchard, viaduct, water tower, waterfall, zen garden. For object color~(dummy feature), we randomly pick ten RGB colors for each run.

We randomly pick ten scenes as the non-causal feature for different runs for combined shift and label-correlated shift. Every object is randomly correlated with a background scene with $100\%$/$r$ correlation in different runs.

For each image, the object color is uniformly drawn from five RGB colors. Note that the ten object colors for each environment are picked separately in combined shift and label-uncorrelated shift. The object colors across environments can be non-overlapped.

\subsection{Hyper-parameter selection}
\label{app:hyper}
Below we list the method specific hyper-parameters:

\paragraph{IRMv1~\citep{Arjovsky2019InvariantRM}} The objective of IRMv1 is: $
    \sum_{P\in \Omega}\E_{P}[\ell(w \circ \Phi)] + \lambda \parallel \nabla_w \E_{P}[\ell(w \circ \Phi)]\parallel^2
$. The coefficient of the gradient penality term $\lambda$ is searched over a range of $\{1e-3,1e-2,1e-1,1,10\}$. The number of epochs over which to plug in the gradient penalty term is searched over 1$\sim$5 epochs for all datasets.

\paragraph{VREx~\citep{Krueger2020OutofDistributionGV}:} The objective of VREx is: $
   \sum_{P\in \Omega}\E_{P}[\ell(w \circ \Phi)] + \lambda \textrm{Var}(\{\E_{P}[\ell(w \circ \Phi)]\}_{P\in \Omega})
$. The coefficient of the regularization term (the variance of loss across environments) is searched over a range of $\{1e-3,1e-2,1e-1,1,10\}$. The number of epochs over which to plug in the regularization is searched over 1$\sim$5 epochs for all datasets. 

\paragraph{GroupDRO~\citep{Sagawa2019DistributionallyRN}:} GroupDRO proposes an online algorithm for group distributionally robust optimization. The learning rate of the online exponential gradient descent over the simplex is search over $\{1e-3,1e-2,1e-1\}$.

\paragraph{MLDG~\citep{Li2017DeeperBA}:} MLDG proposes meta learning algorithms for domain generalization. The coefficient of the meta learning objective is search over $\{1,0.5,0.1,0.05\}$. 

\paragraph{Fish~\citep{Shi2021GradientMF}:}Fish uses an inner-loop and outer-loop optimization, which is equivalent to the gradient matching objective. The learning rate in the outer-loop~(meta learning step) is searched over $\{1,0.5,0.1,0.05\}$. 

\paragraph{TRM:} The coefficient of the weighted gradient-matching term is searched over $\{1e-3,1e-2,1e-1,1\}$. We unroll the Taylor expansion for 10 steps in the inverse Hessian vector product approximation, \ie $H^{-1}v \approx H_{10}^{-1}v = \sum_{i=0}^{10}(I-H)^iv$.

We evaluate all the methods both on the train-domain validation (Section~\ref{sec:sce}) set and test-domain validation set (Appendix~\ref{app:extra-exp}).
\subsection{Training}
\subsubsection{Convergence of Algorithm~\ref{alg:trm}}

We first introduce the result from Eq.~3.23 in~\citet{Nemirovski2009RobustSA} and proposition 2 in~\citet{Sagawa2019DistributionallyRN}. Denote the $E-1$-dimension simplex as $\Delta_{E-1}$, and the parameter space of $\Phi$ as $\Theta$. Consider the min-max optimization problem
\begin{align*}
    \max_{\alpha \in \Delta_{E-1}}\min_{\Phi\in \Theta} \sum_{i=1}^{E-1} \alpha_i f_i(\Phi)
\end{align*}
\begin{assumption}
$f_i$ is convex on $\Theta$.
\end{assumption}
\begin{assumption}
We have the unbiased stochastic gradient $\nabla f_i(\Phi;\epsilon)$ of $f_i$, that is $\E_{\epsilon}[\nabla f_i(\Phi;\epsilon)]=\nabla f_i(\Phi)$.
\end{assumption}

Online mirror descent yielding average iterates over $T$ iterations $\bar{\alpha}^T$ and $\bar{\Phi}^T$, has the following guarantee.
\begin{proposition}(\citet{Nemirovski2009RobustSA}, Eq.~3.23).\label{prop:mirror}
    Suppose that Assumptions 1-2 hold. Then we have
    \begin{align*}
        \E_{\epsilon}\left[\max_{\alpha \in \Delta_{E-1}}\sum_{i=1}^{E-1} \alpha_i f_i(\bar{\Phi}^T) - \min_{\Phi\in \Theta}\sum_{i=1}^{E-1} \bar{\alpha}_i^T f_i(\Phi)\right] \le 2 \sqrt{\frac{10(R_{\Phi}^2M_{\Phi}^2+M_{\alpha}^2\log (E-1))}{T}}
    \end{align*}
    where
    \begin{align*}
        &\E_{\epsilon}\left[\parallel \nabla_{\Phi} \sum_{i=1}^{E-1} \alpha_i f_i(\Phi;\epsilon) \parallel_2^2\right] \le M_{\Phi} \\
         &\E_{\epsilon}\left[\parallel \nabla_{\alpha} \sum_{i=1}^{E-1} \alpha_i f_i(\Phi;\epsilon) \parallel_2^2\right] \le M_{\alpha} \\
         &R_{\Phi}^2 = \max_{\Phi} \parallel \Phi \parallel_2^2 - \min_{\Phi} \parallel \Phi \parallel_2^2
    \end{align*}
    for online mirror descent with 1-strongly convex norm $\parallel \cdot \parallel_2$.
\end{proposition}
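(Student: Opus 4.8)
The plan is to recognize that this proposition is the standard convergence guarantee for stochastic saddle-point mirror descent, quoted from \citet{Nemirovski2009RobustSA}; the cleanest justification is simply to invoke that reference, but a self-contained proof would reproduce their argument specialized to the product geometry used here. I would first rewrite the problem as a convex-concave saddle problem in the joint variable $z = (\Phi, \alpha) \in \Theta \times \Delta_{E-1}$ with payoff $\phi(\Phi, \alpha) = \sum_i \alpha_i f_i(\Phi)$, which is convex in $\Phi$ (by the first assumption) and linear, hence concave, in $\alpha$. The associated monotone vector field is $F(z) = (\nabla_\Phi \phi, -\nabla_\alpha \phi)$, and an unbiased stochastic version $F(z;\epsilon)$ is available by the second assumption.

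The key geometric choice is to equip $\Theta$ with the Euclidean prox-function (yielding projected-gradient updates) and the simplex $\Delta_{E-1}$ with the negative-entropy prox-function (yielding the multiplicative exponential-gradient updates that \Algref{alg:trm} uses), then combine them into a single prox setup on the product space. The workhorse is the one-step prox inequality: for any comparator $z$, a mirror step from $z_t$ with step size $\eta$ satisfies
\[
\langle F(z_t;\epsilon_t), z_t - z\rangle \le \frac{1}{\eta}\left(D(z, z_t) - D(z, z_{t+1})\right) + \frac{\eta}{2}\|F(z_t;\epsilon_t)\|_*^2,
\]
where $D$ is the (block-wise) Bregman divergence and $\|\cdot\|_*$ the corresponding dual norm.

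Next I would exploit the convex-concave structure: summing $\langle F(z_t), z_t - z\rangle$ over $t$ lower-bounds, by Jensen's inequality (convexity in $\Phi$, concavity in $\alpha$), the duality gap evaluated at the averaged iterates $\bar{\Phi}^T, \bar{\alpha}^T$. It then remains to bound the prox terms block-wise. For the Euclidean $\Phi$-block the telescoped divergence contributes a term of order $R_\Phi^2/\eta_\Phi$ and the accumulated gradient norms contribute $\eta_\Phi T M_\Phi^2$ in expectation; optimizing $\eta_\Phi$ balances these into order $\sqrt{R_\Phi^2 M_\Phi^2/T}$. For the entropic simplex-block the divergence diameter is the entropy $\log(E-1)$, and the analogous optimization gives order $\sqrt{M_\alpha^2\log(E-1)/T}$. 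Combining the two under a single square root produces the stated bound. The stochastic cross terms vanish in expectation because each $F(\cdot;\epsilon_t)$ is unbiased while $z_t$ is measurable with respect to $\epsilon_1,\dots,\epsilon_{t-1}$ (a martingale-difference argument), and the explicit constants $\sqrt{10}$ and the factor $2$ fall out of the constant bookkeeping in this balancing.

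The main obstacle is getting the two incompatible geometries to cooperate: the Euclidean norm on $\Theta$ and the entropy on $\Delta_{E-1}$ must be weighted so that the block-specific step sizes combine cleanly, which is precisely what yields the product structure $R_\Phi^2 M_\Phi^2 + M_\alpha^2\log(E-1)$ and the specific $10$ inside the root. A secondary subtlety is the stochastic handling: one must verify that the expected cross terms cancel, which requires the iterates to be \emph{predictable} with respect to the noise filtration rather than merely that the gradient estimates are unbiased in isolation.
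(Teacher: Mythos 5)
Your proposal is correct and takes the same route as the paper, which does not prove this proposition at all but simply imports it as Eq.~3.23 of \citet{Nemirovski2009RobustSA}; your sketch faithfully reproduces the standard stochastic saddle-point mirror-descent argument from that reference (product prox setup with Euclidean geometry on $\Theta$ and entropy on the simplex, one-step prox inequality, averaging plus Jensen, martingale cancellation, and step-size balancing). Nothing further is needed beyond the citation the paper already gives.
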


Recall that $P(Q)=\sum_{P_i \in \Omega\backslash Q} \mathbf{\alpha}_i(Q)P_i$. The TRM risk in \Eqref{risk:per-env} can be formulate the as a saddle point problem:
\begin{align*}
    &\gR(\Phi,w_{all}; Q)=
    \sum_{i:P_i \in \Omega\backslash Q} \alpha_i(Q)\left(\frac{}{}\E_Q[\ell (w_{all}\circ\Phi)]+ \E_{P_i}[\ell(w(Q) \circ \Phi)]\right.\\&\left.\qquad -\lambda D{\langle \mathtt{sg}(\frac{\partial \E_{P(Q)}[\ell(w(Q) \circ \Phi)]}{\partial w}), \frac{\partial \E_{Q}[\ell(w(Q) \circ \Phi)]}{\partial w}\rangle}\right)
\end{align*}

Correspondingly, let $f_i(\Phi,w_{all})=E_Q[\ell (w_{all}\circ\Phi)]+ \E_{P_i}[\ell(w(Q) \circ \Phi)]-\lambda D{\langle \mathtt{sg}(\frac{\partial \E_{P(Q)}[\ell(w(Q) \circ \Phi)]}{\partial w}), \frac{\partial \E_{Q}[\ell(w(Q) \circ \Phi)]}{\partial w}\rangle}$. Denote the average iterate over $T$ iterations as $\bar{\Phi}^T,\bar{w_{all}}^T,\bar{\alpha}(Q)^T$. We define the average per-environment regret as
\begin{align*}
    r_T(Q) = \max_{\alpha(Q)} \gR(\bar{\Phi}^T, \bar{w}_{all}^T; Q)-\min_{\Phi,w_{all},\alpha=\bar{\alpha}(Q)^T}\gR(\Phi,w_{all}; Q)
\end{align*}

\Algref{alg:trm} can be seen as an instance of online mirror descent for saddle point problem above, with the following assumptions:

\begin{assumption}
$f_i(\Phi, w_{all})$ is convex for $(\Phi, w_{all}) \in \Theta$.
\end{assumption}
\begin{assumption}
We have the unbiased stochastic gradient $\nabla f_i(\Phi, w_{all};\epsilon)$ of $f_i$ in each iteration, that is $\E_{\epsilon}[\nabla f_i(\Phi, w_{all};\epsilon)]=\nabla f_i(\Phi, w_{all})$.
\end{assumption}
\begin{proposition}
Suppose that Assumptions 3-4 hold, and $f_i$ is convex, $C$-Lipschitz continuous, and bounded by $B_{\ell}$. Further assume that $\parallel (\Phi, w_{all}) \parallel_2 \le B_{\Phi, w_{all}}$ for all $(\Phi, w_{all})\in \Theta$ with convex set $\Theta\subseteq \mathbb{R}^d$. Then, the average iterate of Algorithm 1 achieves an expected per-environment regret at the rate
\begin{align*}
     \E_{\epsilon}[r_T] \le 2\sqrt{\frac{10(B_{\Phi}^2C^2+B_{\ell}^2\log (E-1))}{T}}
\end{align*}
\begin{proof}
We compare the correspond terms in Proposition~\ref{prop:mirror}.
    \begin{align*}
        &\E_{\epsilon}\left[\parallel \nabla_{\Phi, w_{all}} \sum_{i=1}^{E-1} \alpha_i f_i(\Phi, w_{all};\epsilon) \parallel_2^2\right] \le C^2 \\
         &\E_{\epsilon}\left[\parallel \nabla_{\alpha} \sum_{i=1}^{E-1} \alpha_i f_i(\Phi;\epsilon) \parallel_2^2\right] \le B_l^2 \\
         &R_{\Phi}^2 = \max_{\Phi, w_{all}} \parallel (\Phi, w_{all}) \parallel_2^2 - \min_{\Phi, w_{all}} \parallel (\Phi, w_{all}) \parallel_2^2 \le B_{\Phi, w_{all}}^2
    \end{align*}
We arrive at the result directly by Proposition~\ref{prop:mirror}.
\end{proof}
\end{proposition}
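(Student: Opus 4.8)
The plan is to recognize Algorithm~\ref{alg:trm} as an instance of online mirror descent on the saddle-point reformulation of the per-environment objective, and then to invoke the generic guarantee of Proposition~\ref{prop:mirror} after matching constants. First I would rewrite the per-environment risk $\gR(\Phi,w_{all};Q)$ in its bilinear-in-$\alpha$ form $\sum_{i:P_i\in\Omega\backslash Q}\alpha_i(Q)\,f_i(\Phi,w_{all})$, where each $f_i$ bundles the ERM term, the direct-transfer term $\E_{P_i}[\ell(w(Q)\circ\Phi)]$, and the weighted gradient-matching term. This exhibits the inner supremum over $\textrm{Conv}(\Omega\backslash Q)$ as a maximization over the simplex $\Delta_{E-1}$, so that training is exactly the min-max problem $\max_{\alpha\in\Delta_{E-1}}\min_{(\Phi,w_{all})\in\Theta}\sum_i\alpha_i f_i(\Phi,w_{all})$ handled by Proposition~\ref{prop:mirror}.

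Second, I would verify that the two update rules of Algorithm~\ref{alg:trm} are precisely the primal and dual steps of online mirror descent for this saddle point. The SGD step on $(\Phi,w_{all})$ is the descent step under the Euclidean ($1$-strongly convex $\|\cdot\|_2$) mirror map, while the exponential-gradient update in \Eqref{eq:eg} is the ascent step induced by the negative-entropy potential on $\Delta_{E-1}$, which is $1$-strongly convex with respect to $\|\cdot\|_1$ and is the source of the $\log(E-1)$ factor. Assumptions~3--4 provide exactly the two hypotheses Proposition~\ref{prop:mirror} requires: convexity of each $f_i$ on $\Theta$ and unbiasedness of the stochastic gradients $\nabla f_i(\Phi,w_{all};\epsilon)$.

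Third, the remaining work is to translate the stated regularity into the constants $M_\Phi$, $M_\alpha$, and $R_\Phi$ of the bound. The $C$-Lipschitz continuity of $f_i$ gives $\E_\epsilon\|\nabla_{\Phi,w_{all}}\sum_i\alpha_i f_i\|_2^2\le C^2$; the boundedness $f_i\le B_\ell$ gives $\E_\epsilon\|\nabla_\alpha\sum_i\alpha_i f_i\|_2^2\le B_\ell^2$, since $\partial_{\alpha_i}\sum_j\alpha_j f_j=f_i$; and the diameter bound $\|(\Phi,w_{all})\|_2\le B_{\Phi,w_{all}}$ yields $R_\Phi^2\le B_{\Phi,w_{all}}^2$. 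Substituting these into Proposition~\ref{prop:mirror} and identifying its left-hand side with the average per-environment regret $r_T$ delivers the claimed $\mathcal{O}(1/\sqrt{T})$ rate.

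The main obstacle I anticipate is the first step: rigorously certifying that the online iterates of Algorithm~\ref{alg:trm} really do constitute the mirror-descent scheme of \citet{Nemirovski2009RobustSA}, and that the stop-gradient treatment of $w(Q)$ together with the Hessian-inverse weighting in the gradient-matching term is compatible with the convexity demanded by Assumption~3. Once the objective is cast as a genuine saddle point with valid mirror maps, the constant-matching and the final substitution are routine.
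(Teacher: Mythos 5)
Your proposal follows essentially the same route as the paper: cast the per-environment risk as the bilinear saddle-point problem $\max_{\alpha\in\Delta_{E-1}}\min_{(\Phi,w_{all})\in\Theta}\sum_i\alpha_i f_i$, identify Algorithm~\ref{alg:trm} as stochastic mirror descent (Euclidean on the primal, entropic/EG on the simplex), and match the constants $M_\Phi\le C^2$, $M_\alpha\le B_\ell^2$ (via $\partial_{\alpha_i}\sum_j\alpha_jf_j=f_i$), and $R_\Phi^2\le B_{\Phi,w_{all}}^2$ before invoking Proposition~\ref{prop:mirror}. Your constant-matching is in fact spelled out in more detail than the paper's, and your closing caveat about convexity of the $f_i$ is handled in the paper simply by assuming it (Assumption~3).
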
 
\label{app:training}

\label{app:converge}
\subsubsection{Training details}

We adopt a 4-layer CNN as the feature extractor for 10C-CMNIST and the Wide ResNet~\citep{Zagoruyko2016WideRN} for SceneCOCO. We use ResNet18/ResNet50 as the backbone for PACS. The predictor $w$ is a fully connected layer.

We train for 10 epochs with batch size 128 on 10C-CMNIST, 100 epochs with batch size 128 on SceneCOCO, and 50 epochs with batch size 32 on PACS. SGD with $1e-1$ initial learning rate and 0.9 momentum as the optimizer for 10C-CMNIST and SceneCOCO. We decay the learning rate with a constant $0.1$ at $4$-th epoch on 10C-CMNSIT and $40$-th epoch for SceneCOCO. We fine-tune on PACS with constant learning rate $1e-4$.

All model-specific hyper-parameters are picked on test-domain or train-domain validation set (Appendix~\ref{app:hyper}).

\section{Extra experimental results}
\label{app:extra-exp}
\subsection{10C-CMNIST and SceneCOCO}

\subsubsection{Different Validation Configuration}
\Figref{img:correlated_train} reports the test accuracy on 10C-CMNIST and SceneCOCO with a validation set that has the same distribution as the test set. 
\begin{figure*}[htbp]
    \centering
  \subfigure[Label-correlated shift]{\includegraphics[width=0.325\linewidth]{img/colormnist_correlated.pdf}\includegraphics[width=0.22\linewidth]{img/scenecoco_correlated.pdf}}%
   \subfigure[Combined shift]{\includegraphics[width=0.225\linewidth]{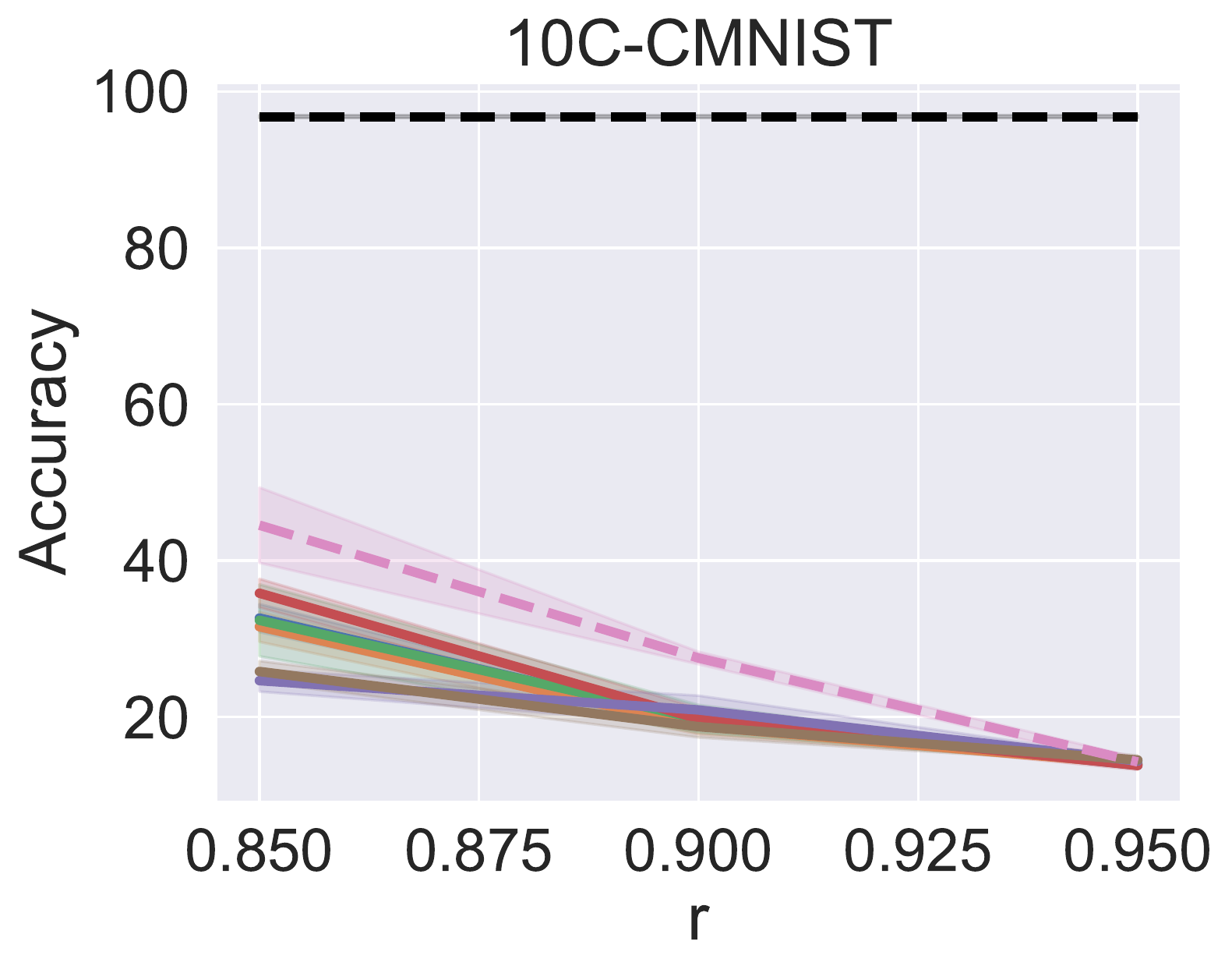}\includegraphics[width=0.22\linewidth]{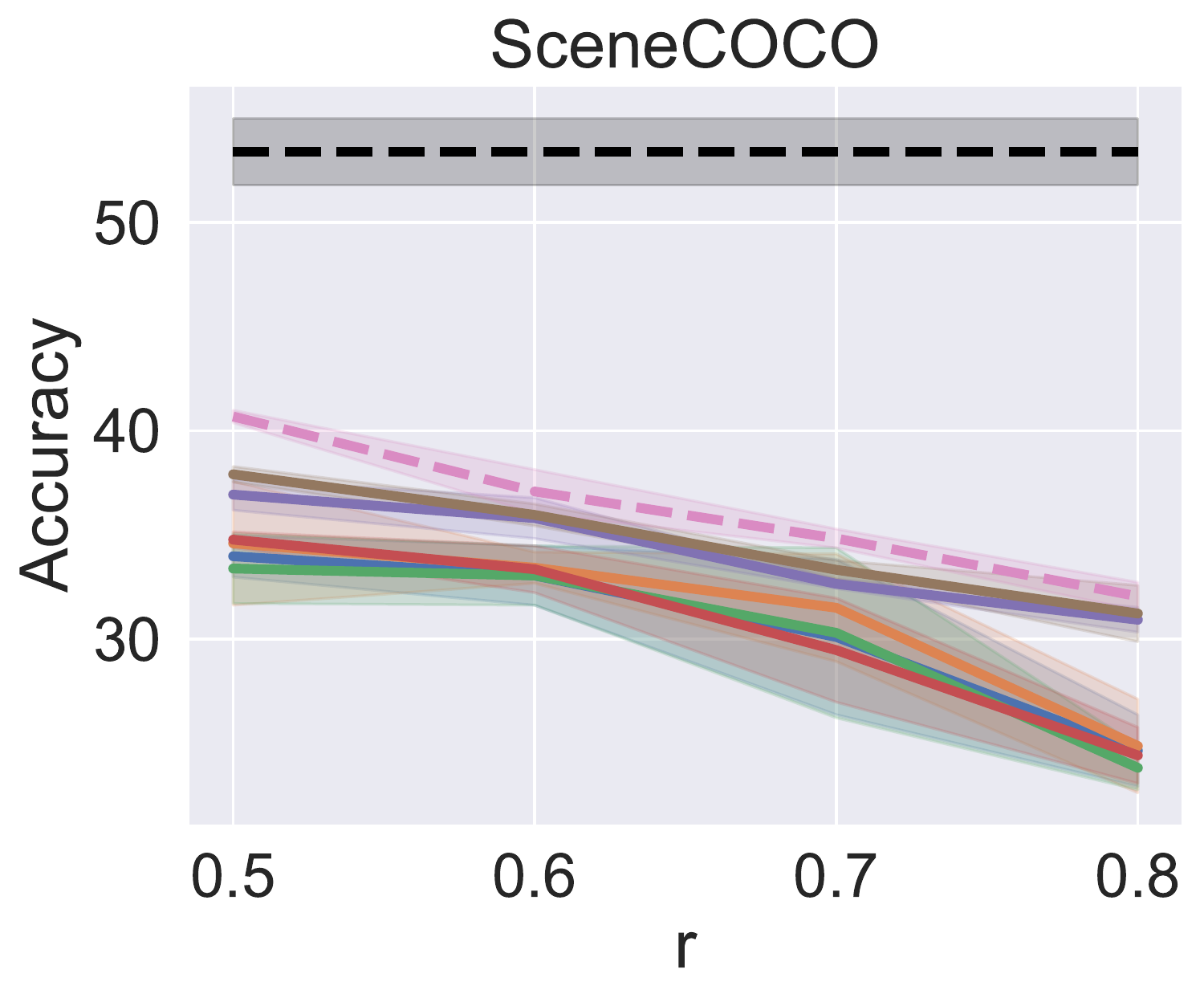}}%
    \caption{Test accuracy on 10C-CMNIST and SceneCOCO datasets in (a) Label-correlated shift and (b) Combined shift, using various bias degrees $r$. The validation set has same distribution as the test set, \ie test-domain validation set.}
    \label{img:correlated_train}
\end{figure*}

\subsubsection{Transferability of models}
\label{app:transfer}

\begin{figure}[htbp]


    \centering
  \subfigure{\includegraphics[width=0.508\linewidth]{img/colormnist_adapt.pdf}}%
  \subfigure{\includegraphics[width=0.35\linewidth]{img/scenecoco_adapt.pdf}}%
    \caption{Test accuracy on 10C-CMNIST and SceneCOCO datasets in (a) Label-correlated shift and (b) Combined shift, using different numbers of training samples for finu-tuning on the target environment. }
    \label{img:adapt}
\end{figure}
We evaluate the transferability of learned features by fine-tuning the model on the unbiased target environment with limited data. In \Figref{img:adapt}, we report the test accuracy on the target environment after fine-tuning with the different numbers data. We use model trained on $100\%$/$90\%$ and $90\%$/$60\%$ bias degree configurations in 10C-CMNIST and SceneCOCO respectively. Our main finding is that the feature extractors trained by TRM outperform other methods under different numbers of data using for fine-tuning. It indicates that models trained by TRM transfer faster to the target environments. 

\subsubsection{Effect of dummy feature}
\label{app:dummy}
Table~\ref{table:label-uncorr} reports the test accuracy on label-uncorrelated domain shift. We observe that all the methods have similar performance and good generalization in this scenario. The experiment suggests that without the biased effect of non-causal features, learning algorithms are robust for domain shift in general. 
\begin{table*}[htbp]
	    \caption{Test accuracy on Label-uncorrelated shift. All the methods achieve performances comparable to the Oracle.}
    	\label{table:label-uncorr}
		\centering
		\begin{tabular}{c c c c}
		\toprule
		\textbf{Algorithm} &  \textbf{10C-CMNIST} & \textbf{SceneCOCO}\\
		\midrule
        \multirow{1}{*}{ERM} &  $98.0\pm 0.1$ & $64.3\pm 0.9$  \\
        \multirow{1}{*}{IRM} &  $98.0\pm 0.1$ & $65.3\pm 0.7$ \\
        \multirow{1}{*}{REx} &  $98.1 \pm 0.1$ & $65.3\pm 0.7$ \\
        \multirow{1}{*}{GroupDRO} & $98.1 \pm 0.1$ & {66.0 $\pm$ 1.0}\\
        \multirow{1}{*}{MLDG} & $\bf{98.4 \pm 0.1}$ & $\bf{67.4 \pm 0.9}$\\
        \multirow{1}{*}{Fish} & $98.2 \pm 0.1$ & {65.4 $\pm$ 0.7}\\
        \midrule
        TRM & {98.3 $\pm$
        0.1} & $65.5 \pm 1.0$  \\
        \midrule
         Oracle & 98.3 $\pm$
        0.1 & $66.9 \pm 0.6$  \\
		\bottomrule
\end{tabular}
	\end{table*}

\subsection{PACS}

Table~\ref{table:pacs-res518-app} demonstrates the test accuracy on PACS when the validation set has the same distribution as the test set (test-domain validation set), using backbones ResNet18 and ResNet50. Table~\ref{table:pacs-res50-app-train} reports the test accuracy on train-domain validation set, using ResNet50.
\label{app:pacs-exp}
\begin{table*}[htbp]
\begin{center}
\caption{Test accuracy on PACS dataset by ResNet18, on a test-domain validation set.}
\label{table:pacs-res518-app}
\begin{tabular}{c c c c c c c c}
		\toprule
		\textbf{Algorithm} & \textbf{Art} & \textbf{Cartoon} & \textbf{Photo} & \textbf{Sketch}	& \textbf{Average}\\
		\midrule
        \multirow{1}{*}{ERM} & $76.2\pm 1.6$ & $70.6\pm 1.0$ & $95.5 \pm 2.4$ & $66.2\pm 1.1$ & $77.1$ \\
        \multirow{1}{*}{IRM}&$76.0\pm 0.7$ &$70.8\pm 1.0$&$95.2\pm 2.4$& $66.7\pm 0.7$ &$77.2$\\
        \multirow{1}{*}{REx}&$75.6\pm 1.7$&$64.9\pm 1.1$&\textbf{95.8 $\pm$ 0.9}&$63.4\pm 1.0$&$74.9$\\
        \multirow{1}{*}{GroupDRO}& $77.6\pm 0.3$ & $72.9\pm 1.1$& $94.9\pm 2.4$&$69.2\pm 0.9$&$78.6$\\
        \multirow{1}{*}{MLDG}& $77.3\pm 1.2$ & $72.0\pm 0.6$& $95.4\pm 1.0$&$\bf{72.3\pm 0.2}$&$79.2$\\
        \multirow{1}{*}{Fish}& $76.8\pm 1.7$ & $72.9\pm 0.3$& $94.5\pm 1.1$&$70.6\pm 1.0$&$78.7$\\
        \midrule
        TRM & \textbf{78.7 $\pm$ 0.5} &\textbf{74.4 $\pm$ 0.6} & $94.6\pm 0.9$ &$71.4 \pm 2.3$ & \textbf{79.8} \\
		\bottomrule
\end{tabular}
\end{center}
\end{table*}
% \begin{table*}[htbp]
% \begin{center}
% \caption{Test accuracy on PACS dataset by ResNet50, on a test-domain validation set.}
% \label{table:pacs-res50-app}
% \begin{tabular}{c c c c c c c c}
% 		\toprule
% 		\textbf{Algorithm} & \textbf{Art} & \textbf{Cartoon} & \textbf{Photo} & \textbf{Sketch}	& \textbf{Average}\\
% 		\midrule
%         \multirow{1}{*}{ERM} & $85.7\pm 1.6$ & $79.1\pm 0.4$ & $98.5\pm 1.1$ & $72.4\pm 2.7$ & $83.9$ \\
%         \multirow{1}{*}{IRM}&$85.7\pm 0.4$ &$79.2\pm 1.0$&$98.5\pm 0.3$& $71.8\pm 2.2$ &$83.8$\\
%         \multirow{1}{*}{REx}&$83.4\pm 0.9$&$77.8\pm 2.8$&$98.5\pm 0.3$&$71.9\pm 2.0$&$82.9$\\
%         \multirow{1}{*}{GroupDRO}& $87.0\pm 1.0$& $80.8\pm 1.1$& \textbf{98.7 $\pm$ 0.1}& $71.1\pm 2.6$&$84.4$\\
%         \midrule
%         TRM & \textbf{88.3 $\pm$ 0.2} & \textbf{86.3 $\pm$ 0.4}& $98.3\pm 0.4$ & \textbf{78.9 $\pm$ 1.0} & \textbf{86.9} \\
% 		\bottomrule
% \end{tabular}
% \end{center}
% \end{table*}

\begin{table*}[htb]
\begin{center}
\caption{Test accuracy on PACS dataset by ResNet50, on a train-domain validation set.}
\label{table:pacs-res50-app-train}
\begin{tabular}{c c c c c c c c}
		\toprule
		\textbf{Algorithm} & \textbf{Art} & \textbf{Cartoon} & \textbf{Photo} & \textbf{Sketch}	& \textbf{Average}\\
		\midrule
        \multirow{1}{*}{ERM} & $83.7\pm 1.3$ & $68.6\pm 0.4$ & $98.0\pm 0.4$ & $69.8\pm 1.0$ & $80.0$ \\
        \multirow{1}{*}{IRM}&$81.9\pm 1.1$ &$68.6\pm 0.4$&$98.0\pm 0.3$& $70.9\pm 1.0$ &$79.9$\\
        \multirow{1}{*}{REx}&$84.2\pm 1.5$&$68.4\pm 1.0$&$98.0\pm 0.5$&$69.3\pm 0.6$&$80.0$\\
        \multirow{1}{*}{GroupDRO}& $84.3\pm 2.0$& $69.4\pm 0.4$& $98.2 \pm0.3$& $70.8\pm 0.3$&$80.7$\\
        \multirow{1}{*}{MLDG}& $85.0\pm 0.7$& $72.8\pm 0.5$& {$98.0 \pm 0.4$}& $\bf{74.8\pm 0.2}$&$82.7$\\
        \multirow{1}{*}{Fish}& $83.9\pm 0.3$& $71.4\pm 0.4$& \textbf{98.4 $\pm$ 0.1}& $72.3\pm 0.1$&$81.5$\\
        \midrule
        TRM & \textbf{85.8 $\pm$ 1.1} & \textbf{77.3 $\pm$ 0.5}& $97.6\pm 0.2$ & {$70.9\pm 1.2$} & \textbf{82.9} \\
		\bottomrule
\end{tabular}
\end{center}
\end{table*}

\subsection{Training overheads}

In Table~\ref{table:time}, we report the average wall-clock time (seconds/epoch) of different methods on SceneCOCO with batch size 128. We observe that TRM does not take significantly longer training time although it needs to compute the per-environment optimal predictor $w$ and weighted gradient matching term in every minibatch.
\label{app:time}

\begin{table*}[htb]
\begin{center}
\caption{Per-epoch training time of different algorithms.}
\label{table:time}
\begin{tabular}{c c c c c c c c}
		\toprule
		\textbf{Algorithm} &  ERM& IRM & REx & GroupDRO & Fish & MLDG & TRM\\
		\midrule
        \textbf{Wall-clock time (s)} & 5.81 &5.95 & 5.86 & 5.90 &  6.55 & 10.69 & 8.69\\
        \bottomrule
\end{tabular}
\end{center}
\end{table*}

% \section{Potential negative effect}
% \label{app:negative}
% One potential negative effect of our work is that the current systems are too brittle and susceptible to adversarial examples leading to unanticipated failure. Addressing the above concerns requires further research in providing robustness guarantees for such model-based decision making algorithms

\subsection{Experiments on Group Distributional Robustness}
\label{app:dro}
\subsubsection{Setups}

The group distributional robustness~\citep{Hu2018DoesDR, Sagawa2019DistributionallyRN} aims to minimize the \emph{worse-group accuracy} over a set of pre-defined groups, instead of the average accuracy over these groups. Following \citet{Sagawa2019DistributionallyRN}, we uses \textbf{CelebA} dataset~\citep{Liu2015DeepLF} for evaluation. The hair color \{blond, non-blond\} is used as label and the gender \{male, female\} as the spurious feature. There are four groups in the dataset, namely blond-haired male, blond-haired female, non-blond-haired hair male, non-blond-haired female, with a total of 162700 datapoints and 1387 datapoints in the smallest group (blond-haired male). 

We compare TRM with vanilla \textbf{ERM}, \textbf{GroupDRO}, and \textbf{Reweight}, which sets the sampling weights to the inverse of the group priors~\citep{Sagawa2019DistributionallyRN}. Each group has only one class label and hence hinders the predictor transfer procedure in TRM. Thus we combine blond-haired male and non-blond-haired hair male/blond-haired female and non-blond-haired hair female as two groups for TRM training. We resize all images to $(3,224,224)$ and use the ImageNet-pretrained ResNet18 as the backbone of feature extractors. We use the SGD optimizer with a momentum of $0.9$, a weight decay of 1e-4, and a fixed learning rate of 1e-4. The batch size is set to $32$.
\subsubsection{Results}

In Table~\ref{table:celeba-res18}, we report the average and worse-group test accuracy on CelebA dataset. The proposed TRM algorithm improves over other baselines on the worse-group test accuracy. We observe that TRM still has a competitive average accuracy with other baselines. The results show that TRM more relies on the invariant features~(hair color) for prediction, leading to lower worse-group accuracy, even though the spurious feature~(gender) exists in the dataset.

\begin{table*}[htbp]
\begin{center}
\caption{Worse-group and average accuracy on CelebA dataset}
\label{table:celeba-res18}
\begin{tabular}{c c c c c c c}
		\toprule
		 & ERM & Reweight & GroupDRO & TRM \\
		\midrule
        \multirow{1}{*}{Worse-group accuracy} & $46.0\pm 2.9$ & $89.3\pm 1.2$& $90.0\pm 1.3$ & $\bf{90.3\pm 0.4}$ \\
          \multirow{1}{*}{Average accuracy} & $\bf{94.7\pm 0.3}$ & $91.8\pm 0.4$& $91.6\pm 0.3$ & $91.6\pm 0.4$ \\
        \bottomrule
\end{tabular}
\end{center}
\end{table*}
%%%%%%%%%%%%%%%%%%%%%%%%%%%%%%%%%%%%%%%%%%%%%%%%%%%%%%%%%%%%%%%%%%%%%%%%%%%%%%%
%%%%%%%%%%%%%%%%%%%%%%%%%%%%%%%%%%%%%%%%%%%%%%%%%%%%%%%%%%%%%%%%%%%%%%%%%%%%%%%

\end{document}